  \providecommand\BibTeX{{%
    \normalfont B\kern-0.5em{\scshape i\kern-0.25em b}\kern-0.8em\TeX}}}
\renewcommand\vec{\mathbf}
\newcommand{\dd}{\mathrm{d}} 
\newcommand{\var}{\operatorname{var}}
\newcommand{\E}{\mathbb{E}}
\newcommand{\reals}{\mathbb{R}}
\newcommand{\ind}[1]{\mathbf{1}_{#1}}
\newcommand{\Xspace}{\mathcal{X}}
\newcommand{\Yspace}{\mathcal{Y}}
\newcommand{\risk}{R}
\newcommand{\Jac}{\mathrm{D}}
\newcommand{\trace}{\operatorname{tr}}
\newcommand{\testset}{\mathcal{T}}
\newtheorem{remark}{Remark}
\newtheorem{corollary}{Corollary}
\newtheorem{proposition}{Proposition}
\newtheorem{definition}{Definition}
\newtheorem{example}{Example}
\newtheorem{theorem}{Theorem}
\begin{document}
\fancyhead{}

\title{Needle in a Haystack: Label-Efficient Evaluation under Extreme Class Imbalance}

\author{Neil G.~Marchant}
\email{nmarchant@unimelb.edu.au}
\orcid{0000-0001-5713-4235}
\affiliation{%
  \institution{School of Computing and Information Systems, University of Melbourne}
  \streetaddress{}
  \city{Melbourne}
  \state{Victoria}
  \country{Australia}
  \postcode{3010}
}
\author{Benjamin I.~P.~Rubinstein}
\email{brubinstein@unimelb.edu.au}
\orcid{0000-0002-2947-6980}
\affiliation{%
  \institution{School of Computing and Information Systems, University of Melbourne}
  \streetaddress{}
  \city{Melbourne}
  \state{Victoria}
  \country{Australia}
  \postcode{3010}
}


\begin{abstract}
  Important tasks like record linkage and extreme classification 
  demonstrate extreme class imbalance, with 1 minority 
  instance to every 1 million or more majority instances. 
  Obtaining a sufficient sample of all classes, even just to achieve 
  statistically-significant evaluation, is so challenging that most current 
  approaches yield poor estimates or incur impractical cost. 
  Where importance sampling has been levied against this challenge, 
  restrictive constraints are placed on performance metrics, estimates do 
  not come with appropriate guarantees, or evaluations 
  cannot adapt to incoming labels.
  This paper develops a framework for online evaluation based 
  on adaptive importance sampling. 
  Given a target performance metric and model for $p(y|x)$, the 
  framework adapts a distribution over items to label in order to 
  maximize statistical precision. 
  We establish strong consistency and a central limit theorem for the 
  resulting performance estimates, and instantiate our framework with 
  worked examples that leverage Dirichlet-tree models.
  Experiments demonstrate an average MSE superior to state-of-the-art 
  on fixed label budgets.
\end{abstract}

\begin{CCSXML}
<ccs2012>
<concept>
<concept_id>10002944.10011123.10011130</concept_id>
<concept_desc>General and reference~Evaluation</concept_desc>
<concept_significance>500</concept_significance>
</concept>
<concept>
<concept_id>10010147.10010257</concept_id>
<concept_desc>Computing methodologies~Machine learning</concept_desc>
<concept_significance>300</concept_significance>
</concept>
<concept>
<concept_id>10002950.10003648.10003670.10003682</concept_id>
<concept_desc>Mathematics of computing~Sequential Monte Carlo methods</concept_desc>
<concept_significance>500</concept_significance>
</concept>
</ccs2012>
\end{CCSXML}

\ccsdesc[500]{General and reference~Evaluation}
\ccsdesc[300]{Computing methodologies~Machine learning}
\ccsdesc[500]{Mathematics of computing~Sequential Monte Carlo methods}

\keywords{performance evaluation, adaptive importance sampling, Dirichlet tree, central limit theorem}


\maketitle

\section{Introduction}
Evaluation of machine learning systems under extreme class imbalance seems 
like a hopeless task. 
When minority classes are exceedingly rare---e.g.\ occurring at a rate of one 
in a million---a massive number of examples (1~million in expectation) 
must be labeled before a single minority example is encountered. 
It seems nigh impossible to reliably estimate performance in these 
circumstances, as the level of statistical noise is simply too high. 
Making matters worse, is the fact that high quality labels for evaluation are 
rarely available for free. 
Typically they are acquired manually at some cost---e.g.\ by employing 
expert annotators or crowdsourcing workers. 
Reducing labeling requirements for evaluation, while ensuring estimates of 
performance are precise and free from bias, is therefore paramount.
One cannot afford to waste resources on evaluation if the results
are potentially misleading or totally useless. 

From a statistical perspective, evaluation can be cast as the estimation
of population performance measures using independently-drawn test data. 
Although \emph{unlabeled} test data is abundant in many applied settings, 
labels must usually be acquired as part of the evaluation process.
To ensure estimated performance measures converge to their population values, 
it is important to select examples for labeling in a statistically sound 
manner. 
This can be achieved by sampling examples \emph{passively} according to 
the data generating distribution. 
However, passive sampling suffers from poor label efficiency for 
some tasks, especially under extreme class imbalance.
This impacts a range of application areas including fraud detection 
\citep{wei_effective_2013}, record linkage \citep{marchant_search_2017}, 
rare diseases \citep{khalilia_predicting_2011} 
and extreme classification \citep{schultheis2020unbiased}.

The poor efficiency of passive sampling for some evaluation tasks motivates 
\emph{active} or \emph{biased sampling} strategies, which improve efficiency 
by focusing labeling efforts on the ``most informative'' 
examples~\citep{sawade_active_2010a}. 
Previous work in this area is based on variance-reduction methods, 
such as stratified sampling \citep{bennett_online_2010, 
druck_toward_2011, gao_efficient_2019}, importance sampling 
\citep{sawade_active_2010, schnabel_unbiased_2016} and adaptive 
importance sampling \citep{marchant_search_2017}. 
However existing approaches suffer from serious limitations, including 
lack of support for a broad range of performance measures 
\citep{bennett_online_2010, sawade_active_2010, welinder_lazy_2013, 
schnabel_unbiased_2016, marchant_search_2017}, weak theoretical 
justification~\citep{bennett_online_2010, druck_toward_2011, 
welinder_lazy_2013} and an inability to adapt sampling based on incoming 
labels~\citep{sawade_active_2010, schnabel_unbiased_2016}. 

In this paper, we present a general framework for label-efficient online 
evaluation that addresses these limitations.
Our framework supports any performance measure that can be expressed as a 
transformation of a vector-valued risk functional---a much broader 
class than previous work. 
This allows us to target simple scalar measures such as accuracy and F1 score, 
as well as more complex multi-dimensional measures such as precision-recall
curves for the first time.
We leverage adaptive importance sampling (AIS) to efficiently select 
examples for labeling in batches.
The AIS proposal is adapted using labels from previous batches in order to 
approximate the asymptotically-optimal variance-minimizing proposal. 
This approximation relies on online estimates of $p(y|x)$, which we propose 
to estimate via a Bayesian Dirichlet-tree~\citep{dennis_bayesian_1996} 
model that achieves asymptotic optimality for deterministic labels. 

We analyze the asymptotic behavior of our framework under general conditions, 
establishing strong consistency and a central limit theorem. 
This improves upon a weak consistency result obtained in a less general 
setting \citep{marchant_search_2017}. 
We also compare our framework empirically against four baselines: 
passive sampling, stratified sampling, importance sampling 
\citep{sawade_active_2010}, and the stratified AIS method of 
\citep{marchant_search_2017}. 
Our approach based on a Dirichlet-tree model, achieves 
superior or competitive performance on all but one of seven test cases.
Proofs and extensions are included in appendices.
A Python package implementing our framework has been released open-source 
at \url{https://github.com/ngmarchant/activeeval}.

\section{Preliminaries}
\label{sec:preliminaries}
We introduce notation and define the label-efficient evaluation problem 
in Section~\ref{sec:problem}. Then in Section~\ref{sec:gen-measures}, we 
specify the family of performance measures supported by our framework.
Section~\ref{sec:limitations-mc} presents novel insights into the 
impracticality of passive sampling relative to class imbalance and evaluation 
measure, supported by asymptotic analysis. 

\subsection{Problem formulation}
\label{sec:problem}
Consider the task of evaluating a set of systems $\mathcal{S}$ which solve 
a prediction problem on a feature space $\Xspace \subseteq \reals^m$ 
and label space $\Yspace \subseteq \reals^l$. 
Let $f^{(s)}(x)$ denote the output produced by system $s \in \mathcal{S}$ 
for a given input $x \in \Xspace$---e.g.\ a predicted label or  
distribution over labels.
We assume instances encountered by the systems are generated i.i.d.\ from 
an unknown joint distribution with density $p(x, y)$ on 
$\Xspace \times \Yspace$. 
Our objective is to obtain \emph{accurate} and \emph{precise} estimates of 
target performance measures (e.g.\ F1 score) with respect to $p(x, y)$ 
at minimal cost. 

We consider the common scenario where an \emph{unlabeled} test pool 
$\testset = \{x_1, \ldots, x_M\}$ drawn from $p(x)$ is available upfront. 
We assume labels are \emph{unavailable} initially and can only 
be obtained by querying a stochastic \emph{oracle} that returns draws from 
the conditional $p(y|x)$.
We assume the response time and cost of oracle queries far outweigh 
contributions from other parts of the evaluation process. 
This is reasonable in practice, since the oracle requires human 
input---e.g.~annotators on a crowdsourcing platform or domain experts. 
Under these assumptions, minimizing the cost of evaluation is equivalent 
to minimizing the number of oracle queries required to estimate 
target performance measures to a given precision. 

\begin{remark}
  \label{rem:oracle}
  A \emph{stochastic oracle} covers the most general case where  
  $p(y|x)$ has support on one or more labels. 
  This may be due to a set of heterogeneous or noisy annotators (not 
  modeled) or genuine ambiguity in the label. 
  We also consider a \emph{deterministic oracle} where $p(y|x)$ is 
  a point mass. 
  This is appropriate when trusting individual judgments from an 
  expert annotator.
\end{remark}


\subsection{Generalized measures}
\label{sec:gen-measures}

When embarking on an evaluation task it is important to select a suitable 
measure of performance.
For some tasks it may be sufficient to measure global error rates, while 
for others it may be desirable to measure error rates for different classes, 
sub-populations or parameter configurations---the possibilities are boundless.
Since no single measure is suitable for all tasks, we consider a broad 
family of measures which correspond mathematically to transformations of 
vector-valued risk functionals.

\begin{definition}[Generalized measure]
  \label{def:gen-measure}
  Let $\ell(x, y; f)$ be a vector-valued loss function that maps instances 
  $(x, y) \in \Xspace \times \Yspace$ to vectors in $\reals^d$ 
  dependent on the system outputs $f = \{f^{(s)}\}$.
  We suppress explicit dependence on $f$ where it is understood.
  Assume $\ell(x, y; f)$ is uniformly bounded in sup norm for all 
  system outputs $f$.
  Denote the corresponding vector-valued risk functional 
  by $\risk = \E_{X, Y \sim p}[\ell(X, Y; f)]$. 
  For any choice of $\ell$ and continuous mapping 
  $g: \reals^{d} \to \reals^{m}$ differentiable at $R$, the \emph{generalized 
  measure} is defined as
  $G = g(\risk)$.
\end{definition}

\begin{table}
  \caption{Representations of binary classification measures as generalized 
  measures. 
  Here we assume $\Yspace = \{0,1\}$, $f(x)$ denotes the predicted class 
  label, and $\hat{p}_1(x)$ is an estimate of $p(y = 1|x)$ according to 
  the system under evaluation.}
  \label{tbl:classification-measures}
  \centering
  \begin{tabular}{l c c}
    \toprule
    {\normalsize Measure} & $\ell(x, y)^\intercal$ & $g(\risk)$ \\
    \midrule
    Accuracy & 
    $\ind{y \neq f(x)} \vphantom{\Big]}$ &
    $1 - \risk$ \\
    Balanced accuracy & 
    $\left[y f(x), y, f(x)\right] \vphantom{\Big]}$ &
    $\frac{\risk_1 + \risk_2 (1 - \risk_2 - \risk_3)}{2\risk_2 (1 - \risk_2)}$  \\
    Precision & 
    $\left[y f(x), f(x)\right] \vphantom{\Big]}$ & 
    $\frac{\risk_1}{\risk_2}$ \\
    Recall & 
    $\left[y f(x), y\right] \vphantom{\Big]}$ & 
    $\frac{\risk_1}{\risk_2}$ \\
    $F_\beta$ score & 
    $\left[y f(x), \frac{\beta^2 y + f(x)}{1 + \beta^2}\right]$ & 
    $\frac{\risk_1}{\risk_2}$ \\
    \parbox{0.28\linewidth}{\raggedright Matthews correlation coefficient} & 
    $\left[y f(x), y, f(x)\right] \vphantom{\Big]}$ & 
    $\frac{\risk_1 - \risk_2 \risk_3}{\sqrt{\risk_2 \risk_3 (1 - \risk_2) (1 - \risk_3)}}$ \\
    \parbox{0.28\linewidth}{\raggedright Fowlkes-Mallows index} & 
    $\left[y f(x), y, f(x)\right] \vphantom{\Big]}$ & 
    $\frac{R_1}{\sqrt{R_2 R_3}} \vphantom{\Big]}$ \\
    Brier score & 
    $2(\hat{p}_1(x) - y)^2$ &
    $R \vphantom{\Big]}$ \\
    \bottomrule
  \end{tabular}
\end{table}

\begin{table}
  \caption{Representations of scalar regression measures as generalized 
  measures. 
  Here $f(x)$ denotes the predicted response according to the 
  system under evaluation.}
  \label{tbl:regression-measures}
  \centering
  \begin{tabular}{l c c}
    \toprule
    Measure      & $\ell(x, y)^\intercal$ & $g(\risk)$ \\
    \midrule
    Mean absolute error & 
      $|y - f(x)|$ &
      $\risk \vphantom{\Big]}$\\
    Mean squared error & 
      $(y - f(x))^2$ &
      $\risk \vphantom{\Big]}$ \\
      \parbox{0.28\linewidth}{\raggedright Coefficient of determination} & 
      $[y, y^2, f(x), f(x)^2]$ & 
      $\frac{\risk_4 - 2 \risk_1 \risk_3 + \risk_1^2}{\risk_2 - \risk_1^2} \vphantom{\Big]}$ \\
    \bottomrule
  \end{tabular}
\end{table}

Although this definition may appear somewhat abstract, it encompasses a
variety of practical measures.
For instance, when $g$ is the identity and $d = 1$ the family reduces to 
a scalar-valued risk functional, which includes accuracy and mean-squared 
error as special cases. 
Other well-known performance measures such as precision and recall can 
be represented by selecting a non-linear $g$ and a vector-valued $\ell$.
Tables~\ref{tbl:classification-measures} and~\ref{tbl:regression-measures} 
demonstrate how to recover standard binary classification and regression 
measures for different settings of $g$ and $\ell$.
In addition to scalar measures, the family also encompasses vector-valued 
measures for vector-valued $g$ and $\ell$. 
These can be used to estimate multiple scalar measures simultaneously---e.g.\ 
precision and recall of a system, accuracy of several competing systems, or 
recall of a system for various sub-populations.
Below, we demonstrate that a vector-valued generalized measure can represent 
a precision-recall (PR) curve.

\begin{example}[PR curve]
  \label{ex:pr-curve}
  A precision-recall (PR) curve plots the precision and recall of a soft 
  binary classifier on a grid of classification thresholds 
  $\tau_1 < \tau_2 < \cdots < \tau_L$. 
  Let $f(x) \in \reals$ denote the classifier score for input $x$, where a 
  larger score means the classifier is more confident the label is positive 
  (encoded as `1') and a smaller score means the classifier is more confident 
  the label is negative (encoded as `0').
  We define a vector loss function that measures whether an instance 
  $(x,y)$ is: 
  (1)~a predicted positive for each threshold (the first $L$ entries), 
  (2)~a true positive for each threshold (the next $L$ entries), and\slash or 
  (3)~a positive (the last entry):
  \begin{equation*}
    \ell(x, y) = \left[ \ind{f(x) \geq \tau_1}, \ldots, \ind{f(x) \geq \tau_L}, 
      y \ind{f(x) \geq \tau_1}, \ldots, y \ind{f(x) \geq \tau_L}, y \right]^\intercal.
  \end{equation*}
  A PR curve can then be obtained using the following mapping function:
  \begin{equation*}
    G = g(R) = \left[ \frac{R_{L+1}}{R_1}, \ldots, \frac{R_{2L}}{R_L}, \frac{R_{L+1}}{R_{2L+1}}, \ldots, \frac{R_{2L}}{R_{2L+1}} \right]^\intercal,
  \end{equation*}
  where the first $L$ entries correspond to the precision at each threshold in 
  ascending order, and the last $L$ entries correspond to the recall at each 
  threshold in ascending order.
\end{example}

\begin{remark}
  We have defined generalized measures with respect to the data generating 
  distribution $p(x, y)$. 
  While this is the ideal target for evaluation, it is common in 
  practice to define performance measures with respect to a sample.
  We can recover these from our definition by substituting an empirical 
  distribution for $p(x,y)$. 
  For example, the familiar sample-based definition of recall can 
  be obtained by setting $\ell(x, y) = [y f(x), y]^\intercal$, 
  $g(R) = R_1 / R_2$ and 
  $p(x, y) = \frac{1}{N} \sum_{i = 1}^{N} \ind{x_i = x} \ind{y_i = y}$.
  Then
  \begin{equation*}
    G_\mathrm{rec} 
    = g(R) 
    = \frac{\frac{1}{N} \sum_{i = 1}^{N} y_i f(x_i)}{\frac{1}{N} \sum_{i = 1}^{N} y_i} 
    = \frac{\mathrm{TP}}{\mathrm{TP} + \mathrm{FN}}.
  \end{equation*}
  Given our assumption that the test pool $\testset$ is drawn from $p(x)$, 
  any consistent sample-based estimator will converge to the population 
  measure.
\end{remark}

\subsection{Inadequacy of passive sampling}
\label{sec:limitations-mc}

We have previously mentioned passive sampling as an obvious baseline for 
selecting instances to label for evaluation.
In this section, we conduct an asymptotic analysis for two sample evaluation 
tasks, which highlights the impracticality of passive sampling under extreme 
class imbalance. 
This serves as concrete motivation for our interest in label-efficient 
solutions.
We begin by defining an estimator for generalized measures based on 
passive samples.

\begin{definition}[Passive estimator for $G$]
  \label{def:passive}
  Let $\mathcal{L} = \{(x_1, y_1), \allowbreak \ldots, \allowbreak (x_N, y_N)\}$ 
  be a labeled sample of size $N$ drawn passively according to $p(x,y)$. 
  In practice, $\mathcal{L}$ is obtained by drawing instances i.i.d.\ from 
  the marginal $p(x)$ and querying labels from the oracle $p(y|x)$.
  The \emph{Monte Carlo} or \emph{passive} estimator for a generalized 
  measure $G$ is then defined as follows:
  \begin{equation*}
    \hat{G}_{N}^{\mathrm{MC}} = g(\hat{\risk}_{N}^{\mathrm{MC}})
    \text{ with }
    \hat{\risk}_{N}^{\mathrm{MC}} = \frac{1}{N} \sum_{(x,y) \in \mathcal{L}} \ell(x, y).
  \end{equation*}
\end{definition}

Note that $\hat{G}_{N}^{\mathrm{MC}}$ is a biased estimator for $G$ in 
general, since $g$ may be non-linear. 
However, it is asymptotically unbiased---that is, $\hat{G}_{N}^{\mathrm{MC}}$ 
converges to $G$ with probability one in the limit $N \to \infty$. 
This property is known as \emph{strong consistency} and it follows from the 
strong law of large numbers \citep[pp.~243--245]{feller_introduction_1968} 
and continuity of $g$. 
There is also a central limit theorem (CLT) for $\hat{G}_{N}^{\mathrm{MC}}$, 
reflecting the  rate of convergence: 
$\E[\|\hat{G}_{N}^{\mathrm{MC}} - G\|] \leq \| \Sigma \|/\sqrt{N}$ 
asymptotically where $\Sigma$ is an asymptotic covariance matrix  
(see Theorem~\ref{thm:ais-clt}).
We shall now use this result to analyse the asymptotic efficiency of the 
passive estimator for two evaluation tasks.

\begin{example}[Accuracy]
  Consider estimating the accuracy $G_\mathrm{acc}$ (row~1 of 
  Table~\ref{tbl:classification-measures}) of a classifier. 
  By the CLT, it is straightforward to show that the passive estimator for 
  $G_\mathrm{acc}$ is asymptotically normal with variance 
  $G_\mathrm{acc}(1 - G_\mathrm{acc}) / N$.
  Thus, to estimate $G_\mathrm{acc}$ with precision $w$ we require 
  a labeled sample of size 
  $N \propto G_\mathrm{acc}(1 - G_\mathrm{acc})/w^2$. 
  Although this is suboptimal\footnote{Theoretically it is possible to achieve 
  an asymptotic variance of zero.} it is not impractical. 
  A passive sample reasonably captures the variance in $G_\mathrm{acc}$. 
\end{example}

This example shows that passive sampling is not always a poor choice. 
It can yield reasonably precise estimates of a generalized measure, so long 
as the measure is sensitive to regions of the space $\Xspace \times \Yspace$ 
with \emph{high density} as measured by $p(x, y)$.
However, where these conditions are not satisfied, passive sampling may 
become impractical, requiring huge samples of labeled data in order to 
sufficiently drive down the variance.
This is the case for the example below, where the measure is is sensitive to 
regions of $\Xspace \times \Yspace$ with \emph{low density} as measured by 
$p(x, y)$. 

\begin{example}[Recall]
  Consider estimating recall $G_\mathrm{rec}$ (row~4 of 
  Table~\ref{tbl:classification-measures}) of a binary classifier.
  By the CLT, the passive estimator for $G_\mathrm{rec}$ is asymptotically 
  normal with variance 
  $G_\mathrm{rec} (1 - G_\mathrm{rec}) / N \epsilon$ where $\epsilon$ denotes 
  the relative frequency of the positive class. 
  Thus we require a labeled sample of size 
  $N \propto G_\mathrm{rec} (1 - G_\mathrm{rec})/w^2 \epsilon$ 
  to estimate $G_\mathrm{rec}$ with precision $w$. 
  This dependence on $\epsilon$ makes passive sampling impractical 
  when $\epsilon \ll 0$---i.e. when the positive class is rare. 
\end{example}

This example is not merely an intellectual curiosity---there are important 
applications where $\epsilon$ is exceedingly small. 
For instance, in record linkage $\epsilon$ scales inversely in the size 
of the databases to be linked \citep{marchant_search_2017}.

\section{An AIS-based framework for label-efficient evaluation}
\label{sec:ais}

\begin{figure}
  \centering
  \includegraphics[scale=1.0]{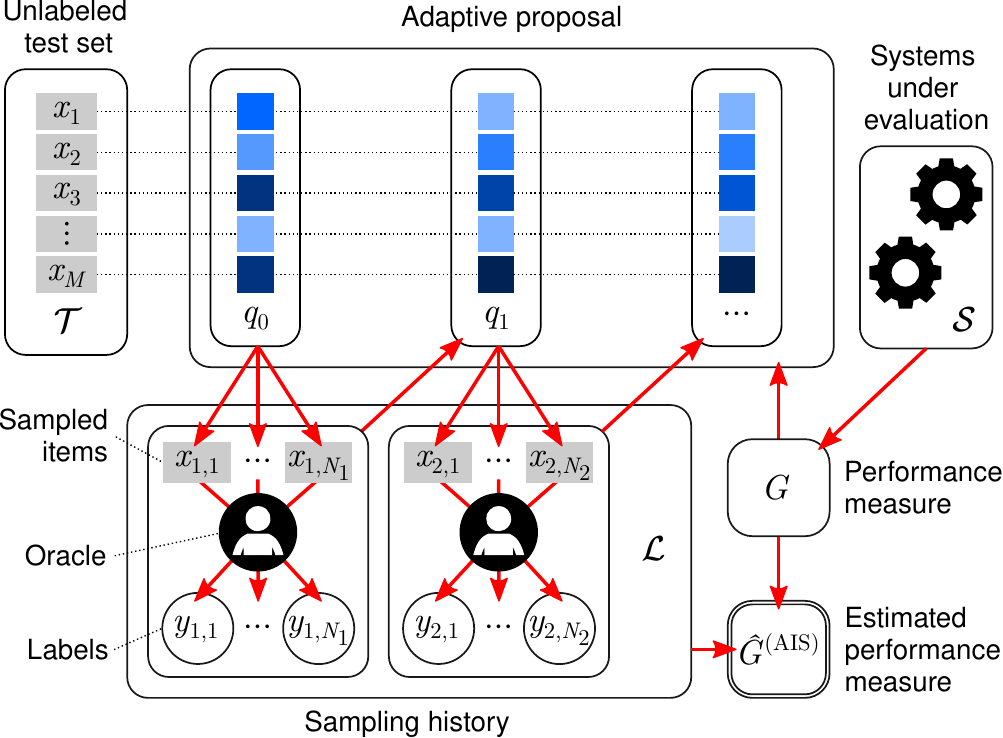}
  \caption{Schematic of the proposed AIS-based evaluation framework.}
  \label{fig:ais-framework}
\end{figure}

\begin{figure}
  \centering
  \begin{minipage}{.6\linewidth}
    \begin{algorithm}[H]
      \begin{algorithmic}
        \Statex \textbf{Input:} generalized measure~$G$; 
        unlabeled test pool~$\testset$; proposal update procedure; sample 
        allocations $N_1, \ldots, N_T$.\vspace{1ex}
        \State Initialize proposal $q_0$ and sample history $\mathcal{L} \gets \emptyset$
        \For{$t = 1$ to $T$}
          \For{$n = 1$ to $N_t$}
            \State $x_{t,n} \sim q_{t - 1}$
            \State $w_{t,n} \gets 
              \frac{p(x_{t,n})}{q_{t - 1}(x_{t,n})}$
            \State $y_{t,n} \sim \operatorname{Oracle}(x_{t,n})$
            \State $\mathcal{L} \gets 
              \mathcal{L} \cup \{(x_{t,n}, y_{t,n}, w_{t,n})\}$
          \EndFor
          \State Update proposal $q_t$ using $\mathcal{L}$
        \EndFor
        \State $\hat{R}_{N}^{\mathrm{AIS}} \gets 
          \frac{1}{N} \sum_{(x,y,w) \in \mathcal{L}} w \, \ell(x,y)$ 
        \State $\hat{G}_{N}^{\mathrm{AIS}} \gets 
          g(\hat{R}_{N}^{\mathrm{AIS}})$
        \State \Return $\hat{G}_{N}^{\mathrm{AIS}}$ and history $\mathcal{L}$
      \end{algorithmic}
      \caption{AIS for generalized measures}
      \label{alg:ais}
    \end{algorithm}
  \end{minipage}
\end{figure}

When passive sampling is inefficient\footnote{%
  Unless otherwise specified, we mean \emph{label efficiency} or \emph{sample 
  efficiency} when we use the term ``efficiency'' without qualification. 
}, as we have seen in the preceding
analysis, substantial improvements can often be achieved through biased 
sampling. 
In this section, we devise a framework for efficiently estimating 
generalized measures that leverages a biased sampling approach called 
\emph{adaptive importance sampling (AIS)}~\citep{bugallo_adaptive_2017}.
AIS estimates an expectation using samples drawn sequentially from a biased 
proposal distribution, that is adapted in stages based on samples from 
previous stages. 
It produces non-i.i.d.\ samples in general, unlike passive sampling and 
static (non-adaptive) \emph{importance sampling (IS)} 
\citep{rubinstein_simulation_2016}.
AIS is a powerful approach because it does not assume an effective 
proposal distribution is known a priori---instead it is learnt from data.
This addresses the main limitation of static IS---that one may be stuck 
using a sub-optimal proposal, which may compromise label efficiency.

There are many variations of AIS which differ in: (i)~the way samples 
are allocated among stages; (ii)~the dependence of the proposal 
on previous stages; (iii)~the types of proposals considered; 
and (iv)~the way samples are weighted within and across stages. 
Our approach is completely flexible with respect to points~(i)--(iii).  
For point~(iv), we use simple importance-weighting as it is amenable 
to asymptotic analysis using martingale theory \citep{portier_asymptotic_2018}. 
A more complex weighting scheme is proposed in~\citep{cornuet_adaptive_2012} 
which may have better stability, however its asymptotic behavior is not 
well understood.\footnote{Consistency was proved for this weighting scheme 
in the limit $T \to \infty$ where $\{N_t\}$ is a monotonically increasing 
sequence \citep{marin_consistency_2019}. 
To our knowledge, a CLT remains elusive.}

Our framework is summarized in Figure~\ref{fig:ais-framework}. 
Given a target performance measure $G$ and an unlabeled test pool 
$\mathcal{T}$, the labeling process proceeds in several stages indexed by 
$t \in \{1, \ldots, T\}$. 
In the $t$-th stage, $N_t \geq 1$ instances are drawn i.i.d.\ from 
$\mathcal{T}$ according to proposal $q_{t - 1}$. 
Labels are obtained for each instance by querying the oracle, and recorded 
with their importance weights in $\mathcal{L}$.
At the end of the $t$-th stage, the proposal is updated for the 
next stage. 
This update may depend on the weighted samples from all previous 
stages as recorded in $\mathcal{L}$. 
At any point during sampling, we estimate $G$ as follows:
\begin{equation}
  \hat{G}_{N}^{\mathrm{AIS}} = g(\hat{\risk}_{N}^{\mathrm{AIS}}) 
  \text{ where }
  \hat{\risk}_{N}^{\mathrm{AIS}} = \frac{1}{N} \sum_{(x, y, w) \in \mathcal{L}} w \, \ell(x, y).
  \label{eqn:G-AIS}
\end{equation}

For generality, we permit the user to specify the sample allocations and 
the procedure for updating the proposals in Figure~\ref{fig:ais-framework}. 
In practice, we recommend allocating a small number of samples to each 
stage, as empirical studies suggest that efficiency improves when the 
proposal is updated more frequently~\citep{portier_asymptotic_2018}. 
However, this must be balanced with practical constraints, as a small 
sample allocation limits the ability to acquire labels in parallel. 
In Section~\ref{sec:adaptive-proposals}, we recommend a practical procedure for 
updating the proposals. 
It approximates the asymptotically-optimal variance-minimizing proposal 
based on an online model of $p(y|x)$. 
We present an example model for $p(y|x)$ in Section~\ref{sec:model-proposal}, 
which can leverage prior information from the systems under evaluation. 
Further practicalities including sample reuse and confidence intervals 
are discussed in Section~\ref{app-sec:practicalities}.

\begin{remark}[Constraint on the proposal]
  \label{rem:constraint}
  In a standard application of AIS for estimating 
  $\E_{X, Y \sim p}[\phi(X, Y)]$, one is free to select any proposal 
  $q_t(x, y)$ with support on the set 
  $\{(x, y) : \|\phi(x,y)\| p(x, y) \neq 0$\}.
  However, we have an additional constraint since we cannot bias sampling 
  from the oracle $p(y|x)$. 
  Thus we consider proposals of the form $q_t(x, y) = q_t(x) p(y|x)$.
\end{remark}

\begin{remark}[Static Importance Sampling]
  \label{rem:static-is}
  Our AIS framework reduces to static importance sampling when 
  $T = 1$ so that all samples are drawn from a single static proposal 
  $q_0$.
\end{remark}

\subsection{Asymptotic analysis}
\label{sec:asymptotic-analysis}
We study the asymptotic behavior of estimates for $G$
produced by the generic framework in Figure~\ref{fig:ais-framework}. 
Since our analysis does not depend on how samples are allocated among 
stages, it is cleaner to identify a sample using a single index 
$j \in \{1, \ldots, N\}$ where $N = \sum_{t = 1}^{T} N_t$ is the total 
number of samples, rather than a pair of indices $(t, n)$.
Concretely, we map each $(t, n)$ to index 
$j = n + \sum_{t' = 1}^{t - 1} N_{t'}$.
With this change, we let $q_{j-1}(x)$ denote the proposal used to 
generate sample $j$. 
It is important to note that this notation conceals the dependence on 
previous samples. 
Thus $q_{j-1}(x)$ should be understood as shorthand for 
$q_{j-1}(x|\mathcal{F}_{j-1})$ where 
$\mathcal{F}_{j} = \sigma((X_1, Y_1), \ldots, (X_j, Y_j))$ 
denotes the filtration. 

Our analysis relies on the fact that $Z_N = N (\hat{R}_N^{\mathrm{AIS}} - R)$ 
is a martingale with respect to $\mathcal{F}_{N}$.
The consistency of $\hat{G}_N^\mathrm{AIS}$ then follows by a strong law of 
large numbers for martingales~\citep{feller_introduction_1971} 
and the continuous mapping theorem. 
A proof is included in Appendix~\ref{app-sec:proof-ais-consistency}.

\begin{theorem}[Consistency]
  \label{thm:ais-consistency}
  Let the support of proposal $q_j(x,y) = q_j(x) p(y|x)$ be a superset of 
  $\{x, y \in \Xspace \times \Yspace : \allowbreak \|\ell(x, y)\| p(x, y) \neq 0 \}$ 
  for all $j \geq 0$ and assume
  \begin{equation}
  \sup_{j \in \mathbb{N}} \E \! \left[\left( \frac{p(X_j)}{q_{j-1}(X_j)} \right)^2 \middle| \mathcal{F}_{j-1} \right] < \infty.
  \label{eqn:finite-var}
  \end{equation}
  Then $\hat{G}_N^{\mathrm{AIS}}$ is strongly consistent for $G$.
\end{theorem}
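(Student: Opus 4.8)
The plan is to establish strong consistency of $\hat{\risk}_N^{\mathrm{AIS}}$ for $\risk$ by a martingale strong law of large numbers, and then transfer it to $\hat{G}_N^{\mathrm{AIS}}$ via the continuous mapping theorem using continuity of $g$ at $\risk$. Write $W_j = p(X_j)/q_{j-1}(X_j)$ for the $j$-th importance weight and $V_j = W_j\, \ell(X_j, Y_j)$, so that $\hat{\risk}_N^{\mathrm{AIS}} = \tfrac{1}{N}\sum_{j=1}^{N} V_j$. First I would confirm the martingale structure asserted in the text. Conditionally on $\mathcal{F}_{j-1}$, sample $j$ is generated by drawing $X_j \sim q_{j-1}$ and then $Y_j \sim p(\cdot \mid X_j)$, so
\[
  \E[V_j \mid \mathcal{F}_{j-1}]
  = \int_{\Xspace} q_{j-1}(x)\, \frac{p(x)}{q_{j-1}(x)} \left( \int_{\Yspace} \ell(x,y)\, p(y \mid x)\, \dd y \right) \dd x
  = \int \ell(x,y)\, p(x,y)\, \dd x\, \dd y = \risk,
\]
where the support hypothesis on $q_{j-1}$ ensures the set $\{q_{j-1} = 0\}$ contributes nothing to the integral (there $\|\ell\|\,p = 0$) and that there is no division by zero on the support of $\|\ell\|\,p$. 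Hence $D_j := V_j - \risk$ is a martingale-difference sequence adapted to $\{\mathcal{F}_j\}$ and $Z_N = \sum_{j=1}^{N} D_j = N(\hat{\risk}_N^{\mathrm{AIS}} - \risk)$ is an $\mathcal{F}_N$-martingale.

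Next I would control the conditional second moments of the increments. Because $\ell$ is uniformly bounded in sup norm, say $\|\ell(x,y)\|_\infty \le C$ for all $(x,y)$ and all system outputs, each coordinate obeys $\E[D_{j,k}^2 \mid \mathcal{F}_{j-1}] \le \E[W_j^2\, \ell_k(X_j,Y_j)^2 \mid \mathcal{F}_{j-1}] \le C^2\, \E[W_j^2 \mid \mathcal{F}_{j-1}]$, and assumption~\eqref{eqn:finite-var} says exactly that $\kappa := \sup_j \E[W_j^2 \mid \mathcal{F}_{j-1}]$ is almost surely finite. Thus $\sum_{j \ge 1} \E[D_{j,k}^2 \mid \mathcal{F}_{j-1}] / j^2 \le C^2 \kappa \sum_{j \ge 1} j^{-2} < \infty$ almost surely, which is the Kolmogorov-type criterion in the martingale strong law of large numbers~\citep{feller_introduction_1971}: it gives $\tfrac{1}{N}\sum_{j=1}^{N} D_{j,k} \to 0$ almost surely for each coordinate $k$, i.e.\ $\hat{\risk}_N^{\mathrm{AIS}} \to \risk$ almost surely. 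Since $g$ is continuous at $\risk$, the continuous mapping theorem then yields $\hat{G}_N^{\mathrm{AIS}} = g(\hat{\risk}_N^{\mathrm{AIS}}) \to g(\risk) = G$ almost surely.

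I expect the main obstacle to be the rigor of the first step under adaptivity: because the proposals $q_{j-1}$ are data-dependent random densities, one must ensure that, for (almost) every realized history, $q_{j-1}$ still satisfies the stated support condition, so that the conditional-expectation identity $\E[V_j \mid \mathcal{F}_{j-1}] = \risk$ holds path-by-path rather than merely in aggregate. A secondary point is checking that the cited version of the martingale strong law applies to $\reals^d$-valued increments (handled by working coordinatewise) and tolerates an only-almost-surely-finite bound $\kappa$ on the conditional variances rather than a deterministic one. The remaining ingredients --- boundedness of $\ell$, summability of $\sum_j j^{-2}$, and continuity of $g$ --- are routine.
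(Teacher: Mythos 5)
Your proof matches the paper's own argument essentially step for step: the same martingale $Z_N = N(\hat{\risk}_N^{\mathrm{AIS}} - \risk)$, the same use of the support condition to get $\E[V_j \mid \mathcal{F}_{j-1}] = \risk$, the same Kolmogorov-type summability check via boundedness of $\ell$ and \eqref{eqn:finite-var}, the Feller martingale SLLN applied coordinatewise, and the continuous mapping theorem at the end. The only cosmetic difference is that the paper sums the unconditional second moments $\E[\delta_{j,i}^2]/j^2$ whereas you work with conditional ones, which is an equally standard form of the criterion.
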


We also obtain a central limit theorem (CLT), which is useful for assessing 
asymptotic efficiency and computing approximate confidence intervals.
Our proof (see Appendix~\ref{app-sec:proof-ais-clt}) invokes a CLT for 
martingales~\citep{portier_asymptotic_2018} and the multivariate delta method.

\begin{theorem}[CLT]
  \label{thm:ais-clt}
  Suppose
  \begin{equation}
    V_j := {\var} \! \left[\frac{p(X_j)}{q_{j-1}(X_j)} \ell(X_j, Y_j) - \risk 
      \middle| \mathcal{F}_{j-1} \right] 
        \to V_\infty \quad \text{a.s.}, \label{eqn:V-convergence}
  \end{equation}
  where $V_\infty$ is an a.s.\ finite random positive semidefinite matrix, 
  and there exists $\eta > 0$ such that
  \begin{equation}
    \sup_{j \in \mathbb{N}} \, \E \! \left[ \left( \frac{p(X_j)}{q_{j-1}(X_j)} \right)^{2+\eta} 
      \middle| \mathcal{F}_{j-1} \right] < \infty \quad \text{a.s.}
    \label{eqn:asymp-negligible}
  \end{equation}
  Then $\sqrt{N} (\hat{G}_N^{\mathrm{AIS}} - G)$ converges in distribution 
  to a multivariate normal $\mathcal{N}(0, \Sigma)$ with 
  covariance matrix $\Sigma = \Jac g (\risk) V_\infty \Jac g(R)^\intercal$
  where $[\Jac g]_{ij} = \frac{\partial g_i}{\partial R_j}$ is the 
  Jacobian of $g$.
\end{theorem}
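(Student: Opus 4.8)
The plan is to prove a martingale CLT for $\sqrt{N}(\hat{\risk}_N^{\mathrm{AIS}} - \risk)$ and then push it through $g$ with the multivariate delta method, in parallel with how Theorem~\ref{thm:ais-consistency} combined a martingale SLLN with the continuous mapping theorem. Write $Z_N = N(\hat{\risk}_N^{\mathrm{AIS}} - \risk) = \sum_{j=1}^{N} D_j$ with $D_j = \tfrac{p(X_j)}{q_{j-1}(X_j)} \ell(X_j, Y_j) - \risk$. Recall from the discussion preceding Theorem~\ref{thm:ais-consistency} that $Z_N$ is a martingale with respect to $\mathcal{F}_N$; equivalently, the support condition together with the constraint $q_j(x,y) = q_j(x) p(y|x)$ gives $\E[D_j \mid \mathcal{F}_{j-1}] = \risk - \risk = 0$, so $(D_j)_{j \geq 1}$ is a martingale difference sequence adapted to $(\mathcal{F}_j)$. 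It is square-integrable: $\E[\|D_j\|^2 \mid \mathcal{F}_{j-1}] \lesssim \E[(p(X_j)/q_{j-1}(X_j))^2 \mid \mathcal{F}_{j-1}] + 1$, which is a.s.\ finite by~\eqref{eqn:asymp-negligible} and conditional Jensen.

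First I would verify the conditional covariance condition required by the martingale CLT. Since $\risk$ is deterministic, $\E[D_j D_j^\intercal \mid \mathcal{F}_{j-1}] = V_j$, the matrix appearing in~\eqref{eqn:V-convergence}. Hypothesis~\eqref{eqn:V-convergence} states $V_j \to V_\infty$ a.s., so by Cesàro averaging $\frac{1}{N} \sum_{j=1}^{N} \E[D_j D_j^\intercal \mid \mathcal{F}_{j-1}] \to V_\infty$ a.s. Second, I would check a conditional Lyapunov (asymptotic negligibility) condition. Because $\ell$ is uniformly bounded in sup norm and hence $\risk = \E[\ell]$ is bounded, there is a finite constant $C$ (depending only on $\eta$ and the bound on $\ell$) with $\|D_j\|^{2+\eta} \leq C \big( (p(X_j)/q_{j-1}(X_j))^{2+\eta} + 1 \big)$; taking conditional expectations and using~\eqref{eqn:asymp-negligible} gives $M := \sup_{j \in \mathbb{N}} \E[\|D_j\|^{2+\eta} \mid \mathcal{F}_{j-1}] < \infty$ a.s., so $\frac{1}{N^{1+\eta/2}} \sum_{j=1}^{N} \E[\|D_j\|^{2+\eta} \mid \mathcal{F}_{j-1}] \leq M N^{-\eta/2} \to 0$ a.s. Dividing the Lindeberg truncation $\|D_j\|^2 \mathbf{1}_{\{\|D_j\| > \varepsilon \sqrt{N}\}} \leq \varepsilon^{-\eta} N^{-\eta/2} \|D_j\|^{2+\eta}$ shows this dominates the conditional Lindeberg functional. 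With both ingredients in hand, the martingale CLT of \citep{portier_asymptotic_2018} yields $\sqrt{N}(\hat{\risk}_N^{\mathrm{AIS}} - \risk) = Z_N/\sqrt{N} \to \mathcal{N}(0, V_\infty)$.

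Finally I would apply the delta method. By conditional Jensen, \eqref{eqn:asymp-negligible} implies~\eqref{eqn:finite-var}, so Theorem~\ref{thm:ais-consistency} gives $\hat{\risk}_N^{\mathrm{AIS}} \to \risk$ a.s. Writing $g(\hat{\risk}_N^{\mathrm{AIS}}) - g(\risk) = \Jac g(\risk)(\hat{\risk}_N^{\mathrm{AIS}} - \risk) + r_N$ with $\|r_N\|/\|\hat{\risk}_N^{\mathrm{AIS}} - \risk\| \to 0$ a.s.\ by differentiability of $g$ at $\risk$, and noting $\sqrt{N}\|\hat{\risk}_N^{\mathrm{AIS}} - \risk\|$ is tight, we obtain $\sqrt{N} r_N = o_P(1)$; Slutsky's theorem and the continuous mapping theorem then give $\sqrt{N}(\hat{G}_N^{\mathrm{AIS}} - G) \to \Jac g(\risk)\,\mathcal{N}(0, V_\infty) = \mathcal{N}(0, \Jac g(\risk) V_\infty \Jac g(\risk)^\intercal) = \mathcal{N}(0,\Sigma)$.

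I expect the main obstacle to be the randomness of $V_\infty$. When $V_\infty$ is genuinely random, the conclusion "$\to \mathcal{N}(0, \Sigma)$" must be interpreted as stable convergence in distribution (equivalently, on the event that $V_\infty$ is nonsingular, $V_\infty^{-1/2} Z_N/\sqrt{N}$ converges to a standard Gaussian independent of $\mathcal{F}_\infty$), and one must confirm that the cited martingale CLT actually delivers this stronger mode of convergence and that the delta-method step preserves it — which it does, since $\Jac g(\risk)$ is deterministic and stable convergence is preserved under continuous maps and Slutsky-type perturbations. Everything else — the square-integrability, the Cesàro averaging, and the Lyapunov bound — is routine given the uniform boundedness of $\ell$ and hypothesis~\eqref{eqn:asymp-negligible}.
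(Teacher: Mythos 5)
Your proposal follows essentially the same route as the paper's proof: both verify the two hypotheses of the martingale CLT of \citet{portier_asymptotic_2018} (the conditional-covariance convergence from \eqref{eqn:V-convergence} and the negligibility/Lyapunov condition from the boundedness of $\ell$ together with \eqref{eqn:asymp-negligible}) to get $\sqrt{N}(\hat{\risk}_N^{\mathrm{AIS}}-\risk)\Rightarrow\mathcal{N}(0,V_\infty)$, and then apply the multivariate delta method using differentiability of $g$ at $\risk$. Your explicit Lindeberg verification and the remark on stable convergence when $V_\infty$ is random are correct elaborations of details the paper leaves to the cited reference, but they do not change the argument.
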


\subsection{Variance-minimizing proposals}
\label{sec:adaptive-proposals}
In order to achieve optimal asymptotic efficiency, we would like to use the 
AIS proposal that achieves the minimal asymptotic variance $\Sigma$ 
as defined in Theorem~\ref{thm:ais-clt}. 
We can solve for this optimal proposal using functional calculus, as 
demonstrated in the proposition below.
Note that we must generalize the variance minimization problem for 
vector-valued $G$ since $\Sigma$ becomes a covariance matrix. 
We opt to use the \emph{total variance} (the trace of $\Sigma$) 
since the diagonal elements of $\Sigma$ are directly related to 
statistical efficiency, while the off-diagonal elements measure 
correlations between components of $\hat{G}_N^{\mathrm{AIS}}$ 
that are beyond our control. 
This choice also ensures the functional optimization problem is tractable. 

\begin{proposition}[Asymptotically-optimal proposal]
  \label{prop:asymp-opt-proposal}
  \begin{sloppypar}
  Suppose the Jacobian $\Jac_g(R)$ has full row rank and 
  $\E_{X, Y \sim p} \left[\|\Jac_g(R) \, \ell(X, Y)\|_2^2 \right] > 0$.
  Then the proposal
  \end{sloppypar}
  \begin{equation}
      q^\star(x) = \frac{v(x)}{\int v(x) \, \dd x} 
        \quad \text{with} \quad 
          v(x) = p(x) \sqrt{ \int \|\Jac_g(\risk) \, \ell(x, y)\|_2^2 \, p(y|x) \, \dd y}
    \label{eqn:asym-opt-proposal}
  \end{equation}
  achieves the minimum total asymptotic variance of 
  \begin{equation*}
    \trace \Sigma = \left(\int v(x) \, \dd x\right)^2 - \|\Jac_g(R) \, R\|_2^2. 
  \end{equation*}
  Appendix~\ref{app-sec:zero-tot-var} provides sufficient conditions on 
  $G$ and the oracle which ensure $\trace \Sigma = 0$.
\end{proposition}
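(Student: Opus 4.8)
The plan is to reduce the matrix-valued objective $\trace\Sigma$ from Theorem~\ref{thm:ais-clt} to a one-dimensional variational problem over probability densities on $\Xspace$, and then solve it in closed form. Write $A := \Jac_g(\risk)$. First I would argue it suffices to optimize over \emph{fixed} proposals: if an adaptive scheme has proposals converging a.s.\ to a limit $q$, then~\eqref{eqn:V-convergence} gives the same $V_\infty$ as using $q$ at every stage, and for any scheme $\trace(A V_\infty A^\intercal)$ is pointwise at least the value attained over fixed proposals. For a fixed $q$, the draw $X_j \sim q$ is independent of $\mathcal{F}_{j-1}$ and $\E_{X\sim q,\,Y\sim p(\cdot\mid X)}[\tfrac{p(X)}{q(X)}\ell(X,Y)] = \risk$, so $V_\infty = \var_{X\sim q,\,Y\sim p(\cdot\mid X)}[\tfrac{p(X)}{q(X)}\ell(X,Y)]$. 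Expanding the variance and taking the trace — interchanging trace, expectation and the $y$-integral, which is licensed by boundedness of $\ell$ (Definition~\ref{def:gen-measure}) — yields
\begin{equation*}
  \trace\Sigma = \trace\!\big(A V_\infty A^\intercal\big) = \E_{X\sim q,\,Y}\!\left[\Big(\tfrac{p(X)}{q(X)}\Big)^{2} \|A\,\ell(X,Y)\|_2^2\right] - \|A\,\risk\|_2^2 .
\end{equation*}

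The subtracted term does not depend on $q$. Setting $h(x) := \int \|A\,\ell(x,y)\|_2^2\, p(y\mid x)\,\dd y$, so that $v(x) = p(x)\sqrt{h(x)}$ and $v(x)^2 = p(x)^2 h(x)$, the first term equals $\int p(x)^2 h(x)/q(x)\,\dd x = \int v(x)^2/q(x)\,\dd x =: J[q]$, and the task becomes $\min_q J[q]$ subject to $q \geq 0$ and $\int q(x)\,\dd x = 1$. This is where functional calculus enters: the Lagrangian $\int v^2/q\,\dd x + \lambda(\int q\,\dd x - 1)$ has stationarity condition $-v(x)^2/q(x)^2 + \lambda = 0$, forcing $q(x) \propto v(x)$; since $q \mapsto v(x)^2/q(x)$ is convex and the constraint affine, this stationary point is the global minimizer $q^\star(x) = v(x)/\int v(x')\,\dd x'$. (As a check, Cauchy--Schwarz gives $(\int v\,\dd x)^2 = \big(\int \tfrac{v}{\sqrt q}\sqrt q\,\dd x\big)^2 \leq \int \tfrac{v^2}{q}\,\dd x \cdot \int q\,\dd x = J[q]$, with equality iff $q \propto v$.) Substituting $q^\star$ gives $J[q^\star] = (\int v\,\dd x)^2$, hence $\trace\Sigma = (\int v(x)\,\dd x)^2 - \|\Jac_g(\risk)\,\risk\|_2^2$ as claimed. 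Along the way I would note that boundedness of $\ell$ forces $\int v\,\dd x < \infty$, while the positivity hypothesis $\E_{X,Y\sim p}[\|A\,\ell\|_2^2] = \int v(x)^2/p(x)\,\dd x > 0$ forces $\int v\,\dd x > 0$, so $q^\star$ is a genuine probability density; full row rank of $A$ is the non-degeneracy condition under which the Gaussian limit of Theorem~\ref{thm:ais-clt} — and hence $\trace\Sigma$ as an objective — is well posed.

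The step I expect to be the main obstacle is not the optimization, which is routine, but the reduction: making rigorous that minimizing the limit-defined (and a priori random) quantity $\trace\Sigma$ over all admissible adaptive schemes genuinely coincides with the fixed-proposal variational problem, and separately verifying that the minimizer $q^\star$ still meets the support condition of Theorem~\ref{thm:ais-clt}. The latter is delicate because $q^\star(x)\,p(y\mid x)$ can vanish where $A\,\ell(x,y) = 0$ even if $\|\ell(x,y)\|\,p(x,y) \neq 0$, so one has to check that the CLT hypotheses (or a suitable defensive mixture of $q^\star$ with $p$) still apply on the set that actually controls the variance, namely where $\|A\,\ell(x,y)\|\,p(x,y) \neq 0$.
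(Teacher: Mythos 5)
Your proof is correct and takes essentially the same route as the paper: for a fixed proposal it writes $\trace\Sigma = \int v(x)^2/q(x)\,\dd x - \|\Jac_g(R)\,R\|_2^2$ and minimizes the first term subject to $\int q(x)\,\dd x = 1$ via Lagrange multipliers (the paper delegates that optimization step to a citation of Sawade et al., whereas you carry it out and add the Cauchy--Schwarz certificate of global optimality). Your additional remarks on reducing adaptive schemes to fixed proposals and on the support/normalizability of $q^\star$ supply detail the paper leaves implicit but do not alter the core argument.
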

%
%

We use the above result to design a practical scheme for adapting a 
proposal for AIS. 
At each stage of the evaluation process, we approximate the 
asymptotically-optimal proposal $q^\star(x)$ using an online model for 
the oracle response $p(y|x)$. 
The model for $p(y|x)$ should be initialized using prior information 
if available (e.g.\ from the systems under evaluation) and updated at the 
end of each stage using labels received from the oracle. 
However, we cannot simply estimate $q^\star(x)$ by plugging in estimates 
of $p(y|x)$ directly, as the resulting proposal may not satisfy the 
conditions of Theorems~\ref{thm:ais-consistency} and~\ref{thm:ais-clt}. 
Below we provide estimators for $q^\star(x)$ which do satisfy the conditions, 
and provide sufficient conditions for achieving asymptotic optimality.

\begin{proposition}
  \label{prop:valid-proposal-est}
  If the oracle is \emph{stochastic}, let $\hat{p}_t(y|x)$ be an estimate 
  for $p(y|x)$ whose support includes the support of $p(y|x)$ for all 
  stages $t \geq 0$, and assume 
  $\hat{p}_t(y|x) \overset{\mathrm{a.s.}}{\to} \hat{p}_\infty(y|x)$ 
  pointwise in $x$. 
  Alternatively, if the oracle is \emph{deterministic}, let $\pi_t(y|x)$ 
  be a posterior distribution for the response $y(x)$ whose support 
  includes $y(x)$ for all $t \geq 0$, and assume 
  $\pi_t(y|x) \overset{\mathrm{a.s.}}{\to} \pi_\infty(y|x)$ pointwise 
  in $x$.
  Let $\epsilon_t$ be a positive bounded sequence and $\hat{R}_t$ be 
  an estimator for $R$ which converges a.s.\ to $\hat{R}_\infty$. 
  Assume $\Xspace$ is finite (e.g.\ a pool of test data) and 
  $\|\Jac_g(\hat{R}_t)\|_2 \leq K < \infty$ for all $t \geq 0$. 
  Then the proposals 
  \begin{equation*}
     q_t(x) \propto \begin{cases} 
       p(x) \int \max\{ \| \Jac_g(\hat{R}_t) \, \ell(x, y) \|_2, 
        \epsilon_t \ind{\|\ell(x, y)\| \neq 0} \} 
          \pi_t(y|x) \, \dd y, &\text{for a deterministic oracle,} \\
      p(x) \left[ \int \max\{ \| \Jac_g(\hat{R}_t) \, \ell(x, y) \|_2^2, 
        \epsilon_t \ind{\|\ell(x, y)\| \neq 0} \} 
          \hat{p}_t(y|x) \, \dd y \right]^{\frac{1}{2}}, &\text{for a stochastic oracle,} 
    \end{cases}
  \end{equation*}
  satisfy the conditions of Theorems~\ref{thm:ais-consistency} 
  and~\ref{thm:ais-clt}. 
  If in addition $\hat{R}_\infty = R$ and $\hat{p}_\infty(y|x) = p(y|x)$ 
  (alternatively $\pi_\infty(y|x) = \ind{y = y(x)}$) and 
  $\epsilon_t \downarrow 0$, then the proposals approach 
  asymptotic optimality.
\end{proposition}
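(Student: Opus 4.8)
The plan is to prove the two claims separately. First, that the displayed proposals $q_t$ satisfy the hypotheses of Theorems~\ref{thm:ais-consistency} and~\ref{thm:ais-clt}; second, that under the additional limiting assumptions they converge to the optimal proposal $q^\star$ of Proposition~\ref{prop:asymp-opt-proposal}, so that $\trace\Sigma$ attains its minimum. Throughout I would exploit that $\Xspace$ is finite: each $q_t$ lives in the probability simplex on $\Xspace$, each importance weight equals $p(x)/q_t(x)=\bigl(\sum_{x'}\tilde q_t(x')\bigr)/\tilde q_t(x)$ where $\tilde q_t(x)=p(x)\,I_t(x)$ is the unnormalized proposal and $I_t(x)$ the bracketed/integral factor, and $q_t$ is $\mathcal{F}_t$-measurable. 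So it suffices to obtain two-sided bounds on $I_t$ that are uniform in $t$ (and in the samples).

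Next I would verify the support condition and the moment hypotheses together. The role of the floor term $\epsilon_t\ind{\|\ell(x,y)\|\neq0}$ is precisely to make $I_t(x)>0$ at every ``needle'': if $\|\ell(x,y)\|\,p(x,y)\neq0$ then, in the deterministic case, $y(x)$ lies in the support of $\pi_t(\cdot\mid x)$ and $\|\ell(x,y(x))\|\neq0$, so the integrand at $y(x)$ is at least $\epsilon_t>0$ irrespective of the current estimates $\hat R_t,\pi_t$; the stochastic case is identical with $p(y\mid x)$ in place of the point mass and the square/square-root. Hence $q_t(x)\,p(y\mid x)$ covers the required set for every $t$. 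For the upper bound, Definition~\ref{def:gen-measure} gives $\ell$ bounded in sup norm, and with $\|\Jac_g(\hat R_t)\|_2\le K$ and $\{\epsilon_t\}$ bounded the integrand---hence $I_t(x)$---is at most a constant $C$, so $\tilde q_t(x)\le C\,p(x)$ and the normalizer is $\le C$. Combining with the needle lower bound $I_t(x)\gtrsim\epsilon_t\,\pi_t(y(x)\mid x)$ (resp.\ its stochastic analogue) shows $p(x)/q_t(x)$ is uniformly bounded over all $t$ and all $x$ in the support of $q_t$; this makes~\eqref{eqn:finite-var},~\eqref{eqn:asymp-negligible}, and indeed every conditional moment, immediate.

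For the remaining hypothesis~\eqref{eqn:V-convergence} I would use the assumed a.s.\ pointwise convergence $\hat R_t\to\hat R_\infty$ and $\pi_t\to\pi_\infty$ (resp.\ $\hat p_t\to\hat p_\infty$), together with continuity of $g$ and $\Jac_g$ and a dominated-convergence argument (the integrand is uniformly bounded by $C$ and the posteriors converge, via a Scheffé-type step), to conclude $I_t(x)\to I_\infty(x)$ and hence $q_t(x)\to q_\infty(x)$ a.s.\ for each of the finitely many $x$. The conditional variance $V_j$ is a fixed continuous function of $q_{j-1}$ on the finite space (the loss and weights being bounded), so $V_j\to V_\infty$ a.s.\ with $V_\infty$ finite and positive semidefinite. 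Theorems~\ref{thm:ais-consistency} and~\ref{thm:ais-clt} then apply, settling the first claim.

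For asymptotic optimality, under $\hat R_\infty=R$, $\hat p_\infty(y\mid x)=p(y\mid x)$ (resp.\ $\pi_\infty(\cdot\mid x)=\ind{y=y(x)}$) and $\epsilon_t\downarrow0$, the floor disappears in the limit and the same convergence argument yields, in the stochastic case, $q_\infty(x)\propto p(x)\bigl[\int\|\Jac_g(R)\,\ell(x,y)\|_2^2\,p(y\mid x)\,\dd y\bigr]^{1/2}=v(x)$; the deterministic formula reduces the same way once $\pi_\infty$ is the point mass at $y(x)$. So $q_\infty=q^\star$, and by Proposition~\ref{prop:asymp-opt-proposal} the limiting proposal minimizes $\trace\Sigma$, so the proposals approach asymptotic optimality. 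I expect the main obstacle---and the reason the floor is written this way---to be reconciling (i) the need for $q_t$ to keep the required support and uniformly bounded weights at every finite $t$ with (ii) the need for $q_t$ to collapse to $q^\star$: $q^\star$ may vanish at a needle $x$ where $\Jac_g(R)\,\ell(x,y(x))=0$ (possible even with $\Jac_g$ of full row rank), and there $p(x)/q_t(x)$ grows like $\epsilon_t^{-1}$ once $\epsilon_t\downarrow0$, so one must either invoke a mild nondegeneracy ruling such $x$ out or observe that such $x$ contribute a zero row to the delta-method variance and restrict the verification of~\eqref{eqn:asymp-negligible} to the effective support; doing this while keeping the finite-$t$ estimator valid is where the real care lies.
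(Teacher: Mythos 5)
Your argument is correct and follows essentially the same route as the paper's proof: the $\epsilon_t$-floor together with the support of $\pi_t$ (resp.\ $\hat p_t$) gives the required support, boundedness of $\ell$, $\|\Jac_g(\hat R_t)\|_2\le K$ and bounded $\epsilon_t$ give the upper bound and well-definedness of $q_t$, a.s.\ pointwise convergence of the unnormalized proposal on the finite pool (continuous mapping plus dominated convergence) gives $q_t\to q_\infty$ and hence \eqref{eqn:V-convergence}, and identifying $q_\infty$ with $q^\star$ from Proposition~\ref{prop:asymp-opt-proposal} yields asymptotic optimality --- the paper merely packages the moment checks \eqref{eqn:finite-var}, \eqref{eqn:asymp-negligible} and the $V_j$ convergence into Corollary~\ref{app-cor:corollary} for finite $\Xspace$, exactly the content you verify inline. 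The degeneracy you flag at the end (a needle with $\Jac_g(R)\,\ell(x,y(x))=0$, so the weights scale like $\epsilon_t^{-1}$ once $\epsilon_t\downarrow 0$) is a genuine subtlety, but it is equally unaddressed in the paper, whose corollary bounds the conditional moments only stage-by-stage via $\delta_j=\inf_{x\in\mathcal{Q}_j}q_j(x)>0$ rather than uniformly in $j$, so your caveat identifies a shared looseness rather than a defect of your own argument.
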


\subsection{Practicalities}
\label{app-sec:practicalities}

We briefly discuss solutions to two issues that may arise in practical 
settings: sample reuse and approximate confidence regions.

\subsubsection{Sample reuse}
Suppose our framework is used to estimate a generalized 
measure $G_1$. 
If the joint distribution $p(x,y)$ associated with the prediction 
problem has not changed, it may be desirable to \emph{reuse} the 
weighted samples $\mathcal{L}$ to estimate a different generalized 
measure $G_2$. 
This is possible so long as the sequence of proposals used to estimate 
$G_1$ have the required support for $G_2$. 
More precisely, the support of $q_j(x,y)$ must include
$\{x, y \in \Xspace \times \Yspace : \|\ell(x, y)\| p(x, y) \neq 0 \}$ 
for the loss functions associated with $G_1$ \emph{and} $G_2$.

If one anticipates sample reuse, the proposals can be made 
less specialized to a particular measure by mixing with the marginal 
distribution $p(x)$, i.e.\ $q(x) \to (1 - \delta) q(x) + \delta p(x)$
where $\delta \in (0, 1]$ is a hyperparameter that controls the 
degree of specialization.

\subsubsection{Approximate confidence regions}
\label{app-sec:confidence-region}
When publishing performance estimates, it may be desirable to quantify  
statistical uncertainty.
An asymptotic $100(1-\alpha)\%$ confidence region for a generalized measure 
$G$ is given by the ellipsoid
\begin{equation*}
  \left\{G^\star \in \reals^k : (G^\star - \hat{G})^\intercal \hat{\Sigma}^{-1} (G^\star - \hat{G}) \leq 
    \frac{(N - 1)k}{N(N - k)} F_{\alpha,k,N-k} \right\},
\end{equation*}
where $\hat{G}$ is the sample mean, $\hat{\Sigma}$ is the sample 
covariance matrix, and $F_{\alpha,d_1,d_2}$ is the critical value of the 
$F$ distribution with $d_1, d_2$ degrees of freedom at significance level 
$\alpha$.
This region can be approximated using the estimator for $G$ in 
\eqref{eqn:G-AIS} and the following estimator for $\Sigma$:
\begin{equation*}
    \hat{\Sigma}^{\mathrm{AIS}} = \Jac_g (\hat{R}^{\mathrm{AIS}}) \bigg( \frac{1}{N} \sum_{j = 1}^{N} 
        \frac{p(x_j)^2 \ell(x_j, y_j) \ell(x_j, y_j)^\intercal}{q_N(x_j) q_{j-1}(x_j)} \\
       - \hat{R}^{\mathrm{AIS}} \hat{R}^{\mathrm{AIS}\intercal} \bigg) \Jac_g (\hat{R}^{\mathrm{AIS}})^\intercal.
\end{equation*}
This is obtained from the expression for $\Sigma$ in Theorem~\ref{thm:ais-clt},
by plugging in AIS estimators for the variance and $R$, and approximating 
$q_\infty(x)$ by the most recent proposal $q_N(x)$.

\section{A Dirichlet-tree model for the oracle response}
\label{sec:model-proposal}

In the previous section, we introduced a scheme for updating the AIS proposal 
which relies on an online model of the oracle response. 
Since there are many conceivable choices for the model, we left it 
unspecified for full generality.
In this section, we propose a particular model that is suited for evaluating 
\emph{classifiers} when the response from the oracle is \emph{deterministic} 
(see Remark~\ref{rem:oracle}).
Concretely, we make the assumption that the label space 
$\Yspace = \{1, \ldots, C\}$ is a finite set and $p(y|x)$ is a point mass at 
$y(x)$ for all $x \in \testset$. 
An extension of this section for stochastic oracles (where $p(y|x)$ is not 
necessarily a point mass) is presented in 
Appendix~\ref{app-sec:oracle-est-stoc}. 

Since we would like to leverage prior information (e.g.\ classifier scores) 
from the system(s) under evaluation and perform regular updates as labels 
are received from the oracle, we opt to use a Bayesian model.
Another design consideration is label efficiency. 
Since labels are scarce and the test pool $\testset$ may be huge, we want 
to design a model that allows for sharing of statistical strength between 
``similar'' instances in $\testset$.
To this end, we propose a model that incorporates a hierarchical partition 
of $\testset$, where instances assigned to hierarchically neighboring blocks 
are assumed to elicit a similar oracle response.\footnote{
  This is in contrast to models used in related work 
  \citep{bennett_online_2010,marchant_search_2017} which assume the 
  oracle response is independent across blocks of a non-hierarchical 
  partition.
}
Various unsupervised methods may be used to learn a hierarchical partition, 
including hierarchical agglomerative\slash divisive clustering 
\citep{reddy_survey_2010}, $k$-d trees \citep{bentley_multidimensional_1975}, 
and stratification based on classifier scores (see 
Appendix~\ref{app-sec:unsupervised-partition} for a brief review).

\subsection{Generative process}
We assume the global oracle response $\theta$ (averaged over all instances) 
is generated according to a Dirichlet distribution, viz.
\begin{equation*}
  \theta | \alpha \sim \operatorname{Dirichlet} \! \left(\alpha\right),
\end{equation*}
where $\alpha = [\alpha_1, \ldots, \alpha_C] \in \reals_+^C$ are concentration 
hyperparameters.
The label $y_i$ for each instance $i \in \{1, \ldots, M\}$ (indexing 
$\testset$) is then assumed to be generated i.i.d.\ according to $\theta$:
\begin{align*}
  y_i | \theta & \overset{\mathrm{iid.}}{\sim} \operatorname{Categorical} \! \left(\theta\right), 
    & i \in 1, \ldots, M.
\end{align*} 

We assume a hierarchical partition of the test pool $\testset$ is given. 
The partition can be represented as a tree $T$, where the leaf nodes of $T$ 
correspond to the finest partition of $\testset$ into disjoint blocks 
$\{ \testset_k \}_{k = 1}^{K}$ such that $\testset = \bigcup_{k = 1}^{K} 
\testset_k$. 
We assume each instance $i$ is assigned to one of the blocks (leaf nodes)
$k_i \in \{1, \ldots, K\}$ according to a distribution $\psi_y$ with a 
Dirichlet-tree prior \citep{dennis_bayesian_1996, minka_dirichlet-tree_1999}:
\begin{align*}
  \psi_{y} | \beta_y, T & \overset{\mathrm{ind.}}{\sim} \operatorname{DirichletTree}\! \left(\beta_y; T\right), 
    & y \in \Yspace, \\
  k_i | y_i, \psi_{y_i} & \overset{\mathrm{ind.}}{\sim} \operatorname{Categorical} \! \left(\psi_{y_i}\right),
    & i \in 1, \ldots, M.
\end{align*}
The Dirichlet-tree distribution is a generalization of the Dirichlet 
distribution, which allows for more flexible dependencies between the 
categories (blocks in this case). 
Categories that are hierarchically nearby based on the tree structure $T$ 
tend to be correlated. 
The Dirichlet concentration hyperparameters $\beta_y$ associated with the 
internal nodes also control the correlation structure.

\subsection{Inference} 
For a deterministic oracle, the response $y_i$ for instance $i$ is either 
observed (previously labeled) or unobserved (yet to be labeled). 
It is important to model the observation process in case it influences the 
values of inferred parameters. 
To this end, we let $\vec{o}_t = (o_{t,1}, \ldots, o_{t,M})$ be observation 
indicators for the labels $\vec{y} = (y_1, \ldots, y_M)$ at the end of stage 
$t$ of the evaluation process (see Algorithm~\ref{alg:ais}). 
We initialize $\vec{o}_{0} = 0$ and define $\vec{o}_t$ in the obvious way: 
$o_{t,i}$ is 1 if the label for instance $i$ has been observed by the end 
of stage $t$ and 0 otherwise. 
From Algorithm~\ref{alg:ais}, we have that the $n$-th instance selected in 
stage $t$ depends on the labels of the previously observed instances 
$\vec{y}_{(\vec{o}_{t-1})}$ and the block assignments 
$\vec{k} = (k_1, \ldots, k_M)$:
\begin{equation*}
i_{t,n} | \vec{o}_{t-1}, \vec{y}_{(\vec{o}_{t-1})}, \vec{k} \sim q_{t-1}(\vec{y}_{(\vec{o}_t)}, \vec{k}).
\end{equation*}

Our goal is to infer the unobserved labels (and hence the oracle response) 
at each stage $t$ of the evaluation process. 
We assume the block assignments $\vec{k} = (k_1, \ldots, k_M)$ are fully 
observed. 
Since the observation indicators are \emph{independent} of the unobserved 
labels conditional on the observed labels, our model satisfies ignorability 
\citep{jaeger_ignorability_2005}. 
This means we can ignore the observation process when conducting inference. 
Since we do not require a full posterior distribution over all parameters, 
it is sufficient to conduct inference using the expectation-maximization 
algorithm. 
This yields a distribution over the unobserved label for each instance and 
point estimates for the other parameters ($\psi_y$ and $\theta$). 
Full details are provided in Appendix~\ref{app-sec:oracle-est-det}.

\subsection{Asymptotic optimality}

Since the Dirichlet-tree model described in this section is consistent for 
the true oracle (deterministic) response, it can be combined with the proposal 
updates described in Proposition~\ref{prop:valid-proposal-est} to yield 
an asymptotically-optimal AIS algorithm. 
This result is made precise in the following proposition, which is proved 
in Appendix~\ref{app-sec:proof-valid-proposal-det}.

\begin{proposition}
  \label{prop:valid-proposal-det}
  Consider an instantiation of our framework under a deterministic 
  oracle where:
  \begin{itemize}
    \item the oracle response is estimated online using the Dirichlet-tree 
    model described in this section via the EM~algorithm;
    \item the proposals are adapted using the estimator defined in 
    Proposition~\ref{prop:valid-proposal-est} with 
    \item $\epsilon_t = \epsilon_0 (1 - \frac{1}{M} \sum_{i = 1}^{M} o_{t,i})$ 
    for some user-specified $\epsilon_0 > 0$.
  \end{itemize}
  Then Theorems~\ref{thm:ais-consistency} and~\ref{thm:ais-clt} hold and 
  the estimator is asymptotically-optimal.
\end{proposition}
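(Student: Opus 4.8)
The plan is to verify the two hypotheses of Proposition~\ref{prop:valid-proposal-est}---namely the support/convergence conditions on the oracle estimate and the convergence of $\hat{R}_t$---and then invoke that proposition directly. First I would set up the EM fixed-point analysis: the Dirichlet-tree model described above, run via EM on the observed labels $\vec{y}_{(\vec{o}_t)}$, yields a posterior $\pi_t(y|x)$ over each unobserved label together with point estimates of $\theta$ and the $\psi_y$. I need to argue that $\pi_t(y|x)$ has support containing the true $y(x)$ for every $x\in\testset$ and every $t\ge 0$. This follows because the Dirichlet and Dirichlet-tree priors have full support on the simplex (all concentration hyperparameters are strictly positive, so $\alpha_y>0$ and $\beta_y$ strictly positive), so the posterior predictive for any instance assigns strictly positive mass to every label in $\Yspace=\{1,\ldots,C\}$; in particular it never rules out the true label. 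The positivity of $\epsilon_t$ is guaranteed since $\epsilon_0>0$ and $\frac{1}{M}\sum_i o_{t,i}\in[0,1)$ until all instances are labeled---and once everything is labeled the proposal condition is trivially satisfied.

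Next I would establish the two a.s.\ convergence claims. For $\pi_t(y|x)\to\pi_\infty(y|x)$: as sampling proceeds, every instance $x\in\testset$ with $\|\ell(x,y(x))\|p(x,y(x))\neq 0$ is in the support of all proposals (by the preceding bullet and the support requirement built into the proposals of Proposition~\ref{prop:valid-proposal-est}), so by a Borel--Cantelli / second-moment argument using condition~\eqref{eqn:finite-var}-type bounds, each such instance is selected and labeled with probability one in finite time. Once $o_{t,i}=1$, the EM posterior $\pi_t(y|x_i)$ collapses to the point mass $\ind{y=y(x_i)}$ and stays there (observed labels are fixed data for EM), so $\pi_\infty(y|x)=\ind{y=y(x)}$ pointwise, which is exactly the condition needed for asymptotic optimality. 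For $\hat{R}_t\to\hat{R}_\infty$: I would take $\hat{R}_t=\hat{R}_t^{\mathrm{AIS}}$, the running AIS risk estimate from \eqref{eqn:G-AIS}; by Theorem~\ref{thm:ais-consistency} (whose hypotheses hold because the proposals of Proposition~\ref{prop:valid-proposal-est} were just shown to satisfy them) this converges a.s.\ to $R$, giving both the required convergence and $\hat{R}_\infty=R$. The boundedness $\|\Jac_g(\hat{R}_t)\|_2\le K$ holds because $\hat{R}_t$ ranges over a compact set (loss is uniformly bounded) on which $\Jac_g$ is continuous, hence bounded, provided the iterates stay in the region where $g$ is differentiable; I would note this is where the differentiability-at-$R$ assumption in Definition~\ref{def:gen-measure} plus continuity of the EM map is used, and handle the finitely many instances with $\ell(x,y)=0$ via the indicator $\ind{\|\ell(x,y)\|\neq 0}$ inside $\epsilon_t$.

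With all hypotheses of Proposition~\ref{prop:valid-proposal-est} verified, its conclusion gives immediately that the proposals satisfy the conditions of Theorems~\ref{thm:ais-consistency} and~\ref{thm:ais-clt}, and since $\hat{R}_\infty=R$, $\pi_\infty(y|x)=\ind{y=y(x)}$, and $\epsilon_t=\epsilon_0(1-\frac{1}{M}\sum_i o_{t,i})\downarrow 0$ (because every relevant instance is eventually labeled, driving the average occupancy to $1$), the "in addition" clause of that proposition yields asymptotic optimality. The main obstacle I anticipate is the step showing $\pi_t(y|x)\to\ind{y=y(x)}$ rigorously: this requires (i) a clean argument that EM's posterior over an instance's label degenerates exactly to the observed value once that label is revealed---which is where the ignorability property established in the Inference subsection does the work---and (ii) the "every relevant instance gets labeled a.s." claim, which needs the martingale second-moment control from condition~\eqref{eqn:finite-var} to rule out a proposal sequence that starves some instance forever. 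The rest is bookkeeping: checking positivity of hyperparameters, compactness of the risk range, and continuity of $\Jac_g$.
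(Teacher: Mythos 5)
Your proof takes essentially the same route as the paper's: verify that the EM posterior $\pi_t(y|x)$ always keeps the true label $y(x)$ in its support (strictly positive Dirichlet/Dirichlet-tree hyperparameters), that it collapses to $\ind{y = y(x)}$ once an instance is labeled, and that every instance is eventually labeled so $\epsilon_t \downarrow 0$, then invoke Proposition~\ref{prop:valid-proposal-est}; indeed you supply more detail than the paper does (the choice and convergence of $\hat{R}_t$, the Jacobian bound, and the eventual-labeling step, which the paper merely asserts). One small correction: the ``no instance is starved'' step is not driven by the second-moment condition \eqref{eqn:finite-var} (a proposal sequence can satisfy that while giving some instance vanishing mass); it follows from the $\epsilon_t$ floor itself---while any instance remains unlabeled we have $\epsilon_t \geq \epsilon_0/M$, so every unlabeled instance admitting a nonzero-loss label has per-draw selection probability bounded away from zero (finite pool, bounded loss, bounded Jacobian), and the conditional Borel--Cantelli lemma then yields that it is labeled almost surely.
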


\section{Experimental study}
\label{sec:experiments}

We conduct experiments to assess the label efficiency of our proposed 
framework\footnote{Using the procedure described in 
Section~\ref{sec:adaptive-proposals} to adapt the AIS proposal, together 
with the online model for the oracle response presented in 
Section~\ref{sec:model-proposal}.} for a variety of evaluation tasks. 
The tasks vary in terms of the degree of class imbalance, the quality of 
predictions\slash scores from the classifier under evaluation, the size of 
the test pool, and the target performance measure. 
Where possible, we compare our framework (denoted \textsf{Ours}) with the 
following baselines:
\begin{itemize}[itemsep=0pt,topsep=0pt]
  \item \textsf{Passive}: passive sampling as specified in 
  Definition~\ref{def:passive}.
  \item \textsf{IS}: static importance sampling as described in 
  Remark~\ref{rem:static-is}.
  We approximate the asymptotically-optimal proposal as in  
  Proposition~\ref{prop:valid-proposal-est} using estimates of 
  $p(y|x)$ derived from classifier scores. 
  \item \textsf{Stratified}: an online variant of stratified sampling 
  with proportional allocation, as used in \citep{druck_toward_2011}.
  Items are sampled one-at-a-time in proportion to the size of the 
  allocated stratum. 
  \item \textsf{OASIS}: a stratified AIS method for estimating F scores 
  \citep{marchant_search_2017}. 
\end{itemize}

\subsection{Evaluation tasks}
We prepare classifiers and test pools for evaluation using publicly-available 
datasets from various domains, as summarized in Table~\ref{tbl:data-sets}. 
\texttt{amzn-googl}, \texttt{dblp-acm}, \texttt{abt-buy} 
\citep{kopcke_evaluation_2010} and \texttt{restaurant} \citep{riddle_datasets} 
are benchmark datasets for entity resolution. 
They contain records from two sources and the goal is to 
predict whether pairs of records refer to the same entity or not. 
\texttt{safedriver} contains anonymized records from a car insurance 
company, and the task is to predict drivers who are likely to make a 
claim \citep{porto_kaggle}. 
\texttt{creditcard} relates to fraud detection for online credit 
card transactions \citep{pozzolo_calibrating_2015}. 
\texttt{tweets100k} has been applied to tweet sentiment analysis 
\citep{mozafari_scaling_2014}.

For \texttt{amzn-goog}, \texttt{dblp-acm}, \texttt{abt-buy}, 
\texttt{restaurant} and \texttt{tweets100k} we use the same classifiers 
and test pools as in \citep{marchant_search_2017}. 
For \texttt{safedriver} and \texttt{creditcard} we prepare our own by 
randomly splitting the data into train\slash test with a 70:30 ratio, and 
training classifiers using supervised learning. 
In scenarios where labeled data is scarce, semi-supervised or unsupervised 
methods might be used instead---the choice of learning paradigm has no 
bearing on evaluation. 
We consider three target performance measures---F1 score, accuracy and 
precision-recall curves---as separate evaluation tasks.

\begin{table*}
  \caption{Summary of test pools and classifiers under evaluation. 
  The imbalance ratio is the number of positive class instances 
  divided by the number of negative class instances. 
  The true F1~score is assumed unknown.}
  \label{tbl:data-sets}
  \centering
    \begin{tabular}{llrclc}
      \toprule
      Source                                               & Domain             
        & Size    & Imbalance ratio & Classifier type      & F1 score \\
      \midrule
      \texttt{abt-buy} \citep{kopcke_evaluation_2010}      & Entity resolution  
        &  53,753 & 1075       & Linear SVM          & 0.595 \\
      \texttt{amzn-goog} \citep{kopcke_evaluation_2010}    & Entity resolution  
        & 676,267 & 3381       & Linear SVM          & 0.282 \\
      \texttt{dblp-acm} \citep{kopcke_evaluation_2010}     & Entity resolution  
        &  53,946 & 2697       & Linear SVM          & 0.947 \\
      \texttt{restaurant} \citep{riddle_datasets}          & Entity resolution  
        & 149,747 & 3328       & Linear SVM          & 0.899 \\
      \texttt{safedriver} \citep{porto_kaggle}             & Risk assessment    
        & 178,564 & 26.56      & XGBoost \citep{chen_xgboost_2016} & 0.100 \\
      \texttt{creditcard} \citep{pozzolo_calibrating_2015} & Fraud detection    
        &  85,443 & 580.2      & Logistic Regression & 0.728  \\
      \texttt{tweets100k} \citep{mozafari_scaling_2014}    & Sentiment analysis 
        &  20,000 & 0.990      & Linear SVM          & 0.770 \\
      \bottomrule
    \end{tabular}
\end{table*}

\subsection{Setup}
\paragraph{Oracle.}
We simulate an oracle using labels included with each dataset. 
Since the datasets only contain a single label for each instance, we 
assume a deterministic oracle. 
Thus, when computing the consumed label budget, we only count the 
\emph{first} query to the oracle for an instance as a consumed label.
If an instance is selected for labeling again, we reuse the label 
from the first query. 

\paragraph{Partitioning.}
\textsf{Ours}, \textsf{Stratified} and \textsf{OASIS} assume the test 
pool is partitioned so the oracle response within each block is ideally 
uniform.
We construct partitions by binning instances according to their classifier 
scores. 
The bin edges are determined using the \emph{cumulative square-root frequency 
(CSF) method} \citep{dalenius_minimum_1959}, which is widely used for 
stratified sampling.
We set the number of bins to $K = 2^8$. 
Since \textsf{Ours} is capable of exploiting partitions with hierarchical 
structure, we fill in post-hoc structure by associating the CSF bins 
with the leaf nodes of an appropriately-size tree in breadth-first order. 
We consider two trees: a tree of depth~1 with branching factor $K$ (denoted 
\textsf{Ours-1}, equivalent to a non-hierarchical partition) 
and a tree of depth~8 with branching factor~2 (denoted \textsf{Ours-8}). 

\paragraph{Hyperparameters.}
We leverage prior information from the classifiers under evaluation to 
set hyperparameters. 
Wherever a prior estimate of $p(y|x)$ is required, we use the classifier score 
$s(y|x)$, applying the softmax function to non-probabilistic scores.
For the Dirichlet-tree model we set hyperparameters as follows:
\begin{equation*}
  \alpha_y = 1 + \sum_{k = 1}^{K} s(y|k), \quad 
  \beta_{y \nu} = \operatorname{depth}(\nu)^2 + \sum_{k = 1}^{K} s(y|k) \delta_\nu(k)
\end{equation*}
where $s(y|k) = \frac{1}{|\testset_k|} \sum_{x_i \in \testset_k} s(y|x_i)$ 
is the mean score over instances in the $k$-th block, 
$\nu$ denotes a non-root node of the tree $T$, 
$\operatorname{depth}(\nu)$ denotes the depth of node $\nu$ in $T$, 
and $\delta_{\nu}(k)$ is an indicator equal to 1 if node $\nu$ is traversed 
to reach leaf node $k$ and 0 otherwise.

\paragraph{Repeats.}
Since the evaluation process is randomized, we repeated each experiment 
1000 times, computing and reporting the mean behavior with bootstrap 
confidence intervals. 

\subsection{Results}

We provide a summary of the results here, with a particular focus on the 
results for F1 score, which is supported by all baselines.
Detailed results for accuracy and precision-recall curves are included 
in Appendix~\ref{app-sec:extra-plots}.

\paragraph{F1 score.}

To assess convergence of the estimated F1~score, we plot the mean-squared 
error (MSE) as a function of the consumed label budget for all datasets 
and methods.
A plot of this kind is included for \texttt{abt-buy} in 
Figure~\ref{fig:convergence-abt-buy}. 
It shows that \texttt{Ours} is significantly more efficient than the 
baseline methods in this case, achieving a lower MSE for all label budgets. 
The \textsf{Passive} and \textsf{Stratified} methods perform significantly 
worse, achieving an MSE at least one order of magnitude greater than 
the biased sampling methods. 
Figure~\ref{fig:convergence-abt-buy} also plots the convergence of the 
proposal in terms of the mean KL divergence from the 
asymptotically-optimal proposal. 
The results here are in line with expectations: convergence of the F1 score 
is more rapid when the proposal is closer to asymptotic optimality.

\begin{figure}
  \centering
  \includegraphics[scale=1.0]{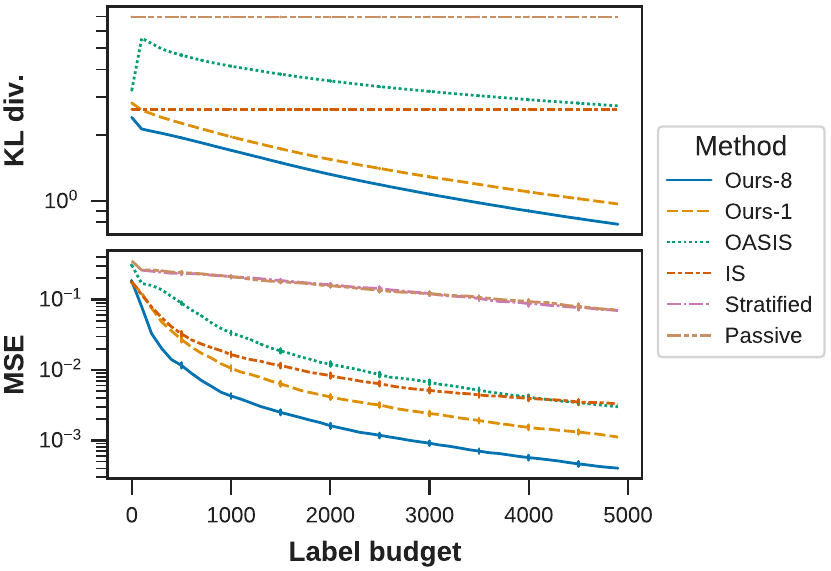}
  \caption{Convergence for \texttt{abt-buy} over 1000 repeats. 
  The upper panel plots the KL divergence from the proposal 
  to the asymptotically-optimal one. 
  The lower panel plots the MSE of the estimated F1 score. 
  95\% bootstrap confidence intervals are included.}
  \label{fig:convergence-abt-buy}
\end{figure}

\begin{figure}
  \centering
  \includegraphics[scale=1.0]{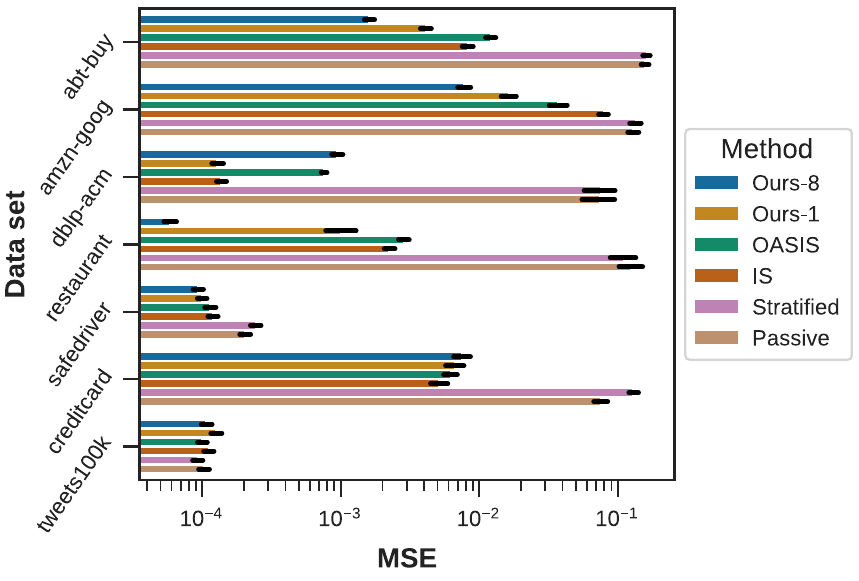}
  \caption{MSE of the estimated F1 score after 2000 label queries over 
  1000 repeats.
  The order of the bars (from top to bottom) for each dataset matches 
  the order in the legend. 
  95\% bootstrap confidence intervals are shown in black.}
  \label{fig:convergence-summary}
\end{figure}

Figure~\ref{fig:convergence-summary} summarizes the convergence plots for 
the six other data sets (included in Appendix~\ref{app-sec:extra-plots}), 
by plotting the MSE of the estimated F1 score after 2000 labels are consumed.
It shows that \textsf{Ours} achieves best or equal-best MSE on all 
but one of the datasets (\texttt{dblp-acm}) within the 95\% confidence 
bands. 
We find the adaptive methods \textsf{Ours} and \textsf{OASIS} perform 
similarly to \texttt{IS} when the prior estimates for $p(y|x)$ are 
accurate---i.e.\ there is less to gain by adapting. 
Of the two adaptive methods, \textsf{Ours} generally converges more 
rapidly than \textsf{OASIS}, which might be expected since our 
procedure for adapting the proposal is asymptotically-optimal.
The hierarchical variant of our model \textsf{Ours-8} tends to 
outperform the non-hierarchical variant \textsf{Ours-1}, which 
we expect when $p(y|x)$ varies smoothly across neighboring blocks. 
Finally, we observe that \textsf{Passive} and \textsf{Stratified} are 
competitive when the class imbalance is less severe. 
This agrees with our analysis in Section~\ref{sec:limitations-mc}. 
We note that similar trends when targeting accuracy in place of F1~score, 
as presented in Appendix~\ref{app-sec:extra-plots}.

\paragraph{PR curves.}
We estimate PR curves on a uniformly-spaced grid of classifier 
thresholds $\tau_1 < \cdots < \tau_L$, where $\tau_1$ is the minimum 
classifier score, $\tau_L$ is the maximum classifier score, and 
$L = 2^{10}$ (see Example~\ref{ex:pr-curve}).
We also use the uniform grid to partition the test pool (in place of the CSF 
method), associating each block of the partition with four neighboring bins 
on the grid to yield $K = 2^8$ blocks. 
Figure~\ref{fig:pr-curves} illustrates the vast improvement of the 
biased sampling methods (\textsf{Ours-8} and \textsf{IS}) over 
\textsf{Passive} for this evaluation task.
The estimated PR curves shown for \textsf{Ours-8} and \texttt{IS} vary 
minimally about the true PR curve, and are reliable for selecting 
an operating threshold. 
The same cannot be said for the curves produced by \textsf{Passive}, 
which exhibit such high variance that they are essentially useless.

\begin{figure}
  \centering
  \includegraphics[scale=1.0]{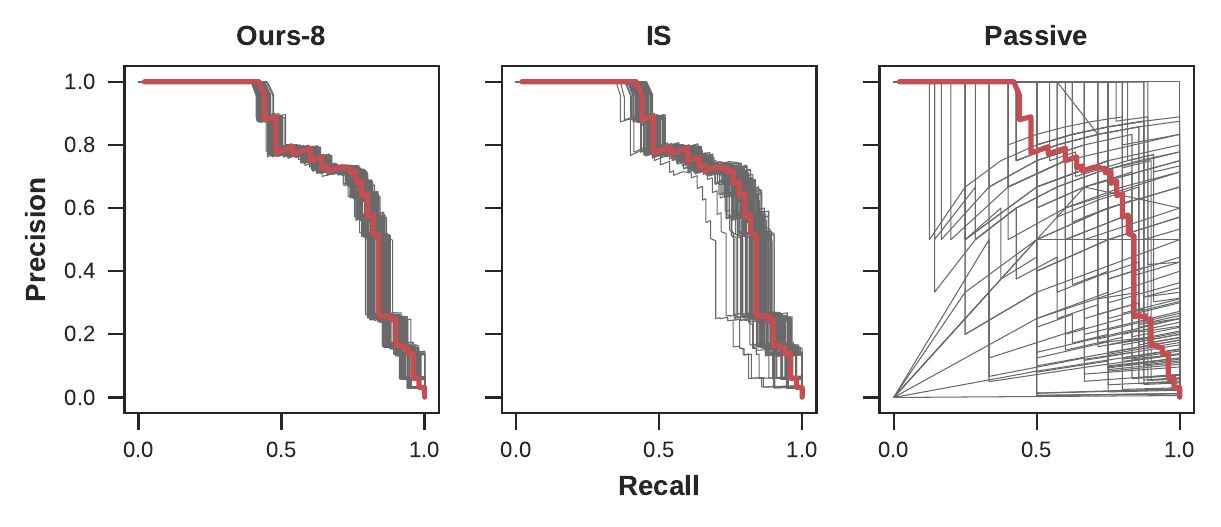}
  \caption{A sample of 100 estimated precision-recall (PR) curves (in dark 
  gray) for three evaluation methods: \textsf{Ours-8}, \textsf{IS} and 
  \textsf{Passive}. 
  The PR curves are estimated for \texttt{abt-buy} using a label budget of 
  5000.
  The thick red curve is the true PR curve (assuming all labels are known).}
  \label{fig:pr-curves}
\end{figure}

\section{Related work}
\label{sec:related}

Existing approaches to label-efficient evaluation in a machine learning
context largely fall into three categories: model-based 
\citep{welinder_lazy_2013}, stratified sampling \citep{bennett_online_2010, 
druck_toward_2011} and importance sampling 
\citep{sawade_active_2010, marchant_search_2017}. 
The model-based approach in~\citep{welinder_lazy_2013} estimates 
precision-recall curves for binary classifiers. 
However, it uses inefficient passive sampling to select instances to label, 
and makes strong assumptions about the distribution of scores and labels 
which can result in biased estimates. 
Stratified sampling has been used to estimate scalar performance measures 
such as precision, accuracy and F1 score. 
Existing approaches~\citep{bennett_online_2010, druck_toward_2011} 
bias the sampling of instances from strata (akin to blocks) using a heuristic 
generalization of the optimal allocation 
principle~\citep{cochran_sampling_1977}. 
However, stratified sampling is considered to be a less effective 
variance reduction method compared to importance 
sampling~\citep{rubinstein_simulation_2016}, and it does not naturally 
support stochastic oracles.

Static importance sampling~\citep{sawade_active_2010} and stratified 
adaptive importance sampling~\citep{marchant_search_2017} have 
been used for online evaluation, and are most similar to our approach. 
However \citep{marchant_search_2017} only supports the estimation of 
F1 score, and \citep{sawade_active_2010} only supports the estimation of 
scalar generalized risks\footnote{These can be viewed as a sub-family 
of generalized measures with the following correspondence
$\ell(x, y) = w(x, y, f(x)) [\ell(f(x), y), 1]^\intercal$, 
and $g(\risk) = \risk_1/\risk_2$.}. 
Both of these methods attempt to approximate the asymptotically-optimal 
variance-minimizing proposal, however the approximation used 
in~\citep{sawade_active_2010} is non-adaptive and is not optimized for 
deterministic oracles, while the approximation used 
in~\citep{marchant_search_2017} is adaptive but less accurate due to 
the stratified design. 

Novel evaluation methods are also studied in the information retrieval 
(IR) community (see survey \citep{kanoulas_short_2016}). 
Some tasks in the IR setting can be cast as prediction problems by 
treating query-document pairs as inputs and relevance judgments as 
outputs. 
Early approaches used relevance scores from the IR system to manage 
the abundance of irrelevant documents in an ad hoc manner 
\citep{cormack_efficient_1998}. 
Recent approaches \citep{schnabel_unbiased_2016, li_active_2017} are 
based on a statistical formulation similar to ours, however they are 
specialized to IR systems. 
Within the IR community, stratified sampling and cluster sampling have 
also been used to efficiently evaluate knowledge 
graphs~\cite{gao_efficient_2019}.

Adaptive importance sampling (AIS) is studied more generally in the context 
of Monte Carlo integration (see review by \citealp{bugallo_adaptive_2017}). 
Most methods are inappropriate for our application, as they assume 
a continuous space without constraints on the proposal (see 
Remark~\ref{rem:constraint}). 
\citet{oh_adaptive_1992} introduced the idea of adapting the proposal 
over multiple stages using samples from the previous stages.  
\citet{cappe_adaptive_2008} devise a general framework using independent 
mixtures as proposals. 
The method of \citet{cornuet_adaptive_2012} continually re-weights 
all past samples, however it is more computationally demanding and 
less amenable to analysis since it breaks the martingale property. 
\citet{portier_asymptotic_2018} analyze parametric AIS in the large 
sample limit. 
This improves upon earlier work which assumed the number of stages 
goes to infinity \citep{douc_limit_2008} or the sample size at 
each stage is monotonically increasing \citep{marin_consistency_2019}.

Finally, we note that label-efficient evaluation may be viewed as a 
counterpart to \emph{active learning}, as both are concerned with 
reducing labeling requirements. 
There is a large body of literature concerning active learning---we 
refer the reader to surveys~\citep{settles_active_2009, gilyazev_active_2018}.
However whereas active learning aims to find a model with low bounded 
risk using actively-sampled training data, active evaluation aims to 
estimate risk using actively-sampled test data for \emph{any model}.

\section{Conclusion}
\label{sec:conclusion}
We have proposed a framework for online supervised evaluation, 
which aims to minimize labeling efforts required to achieve precise, 
asymptotically-unbiased performance estimates. 
Our framework is based on adaptive importance sampling, with 
variance-minimizing proposals that are refined adaptively based 
on an online model of $p(y|x)$. 
Under verifiable conditions on the chosen performance measure and 
the model, we proved strong consistency (asymptotic 
unbiasedness) of the resulting performance estimates and a central 
limit theorem. 
We instantiated our framework to evaluate classifiers using
deterministic or stochastic human annotators. 
Our approach based on a hierarchical Dirichlet-tree model, 
achieves superior or competitive performance on all but one of  
seven test cases.

\begin{acks}
  \begin{sloppypar}
  N.~Marchant acknowledges the support of an Australian Government 
  Research Training Program Scholarship. 
  B.~Rubinstein acknowledges the support of Australian Research Council grant 
  DP150103710.
  \end{sloppypar}
\end{acks}

\bibliographystyle{ACM-Reference-Format}
\bibliography{ais}

\appendix

\break
\appendix

\section{Proof of Theorem~\ref{thm:ais-consistency}}
\label{app-sec:proof-ais-consistency}
First we prove that $\hat{\risk}_N^{\mathrm{AIS}} \overset{\mathrm{a.s.}}{\to} \risk$ 
using a strong law of large numbers (SLLN) for martingales 
\citep[p.~243]{feller_introduction_1971}. 
Consider the martingale 
\begin{equation*}
Z_N = N (\hat{R}_N^{\mathrm{AIS}} - R) 
  = \sum_{j = 1}^{N} \left\{ \frac{p(X_j)}{q_{j-1}(X_j)} \ell(X_j, Y_j) 
    - \risk \right\}
\end{equation*}
and let 
$\delta_{j,i} = \frac{p(X_j)}{q_{j-1}(X_j)} \ell_i(X_j, Y_j) - \risk_i$
denote the $i$-th component of the $j$-th contribution to $Z_N$.
Since $X_j$ is drawn from $q_{j-1}(x)$ and 
$q_{j-1}(x) > 0$ wherever $p(x) \|\ell(x, y)\| \neq 0$, it follows 
that $\E[\delta_{j,i}|\mathcal{F}_{j-1}] = 0$. 
In addition, we have 
\begin{align*}
  \sum_{j = 1}^{\infty} \frac{\E\! \left[\delta_{j,i}^2\right]}{j^2} 
    & = \sum_{j = 1}^{\infty} \frac{1}{j^2} \left\{ \E\! \left[ 
          \frac{p(X_j)^2}{q_{j-1}(X_j)^2} \ell_i(X_j, Y_j)^2 
            \right] + R_i^2 \right\} 
    \leq \sum_{j = 1}^{\infty} \frac{U^2 C}{j^2} 
    < \infty,
\end{align*}
where the inequality follows from the boundedness of $\ell(x,y)$ 
and \eqref{eqn:finite-var}.
Thus the conditions of the SLLN are satisfied and we have 
$\frac{1}{N} \sum_{j = 1}^{N} \delta_{i,j} \overset{\mathrm{a.s.}}{\to} 0 
\implies \hat{\risk}_N^{\mathrm{AIS}} \overset{\mathrm{a.s.}}{\to} \risk$.
Now the continuous mapping theorem states that 
\begin{equation*}
\hat{\risk}_N^{\mathrm{AIS}} \overset{\mathrm{a.s.}}{\to} \risk
  \quad \implies \quad 
  g(\hat{\risk}_N^{\mathrm{AIS}}) \overset{\mathrm{a.s.}}{\to} g(\risk),
\end{equation*}
provided $R$ is not in the set of discontinuity points of $g$.
This condition is satisfied by assumption.

\section{Proof of Theorem~\ref{thm:ais-clt}}
\label{app-sec:proof-ais-clt}
The CLT of \citet{portier_asymptotic_2018} implies that 
$\sqrt{N}(\hat{R}^\mathrm{AIS} - R) \Rightarrow \mathcal{N}(0, V_\infty)$, 
provided two conditions hold. 
The first condition of their CLT holds by assumption \eqref{eqn:V-convergence} 
and the second condition holds by the boundedness of the loss function 
and \eqref{eqn:asymp-negligible}.
The multivariate delta method \citep{vaart_delta_1998} then implies that 
$\sqrt{N}(g(\hat{R}^\mathrm{AIS}) - g(R)) \Rightarrow 
\mathcal{N}(0, \Jac g(\risk) V_\infty \Jac g(\risk)^\intercal)$, since 
$g$ is assumed to be differentiable at $R$ in Definition~\ref{def:gen-measure}.

\section{Proof of Proposition~\ref{prop:asymp-opt-proposal}}
We want to find the proposal $q$ that minimizes the total asymptotic 
variance
\begin{equation*}
  \trace \Sigma = \underset{X, Y \sim p}{\E} \left[ 
      \frac{p(X) \| \Jac_g(R) \, \ell(X,Y) \|_2^2}{q(X)} 
      \right] - \|\Jac_g(R) \, R\|_2^2. 
\end{equation*} 
We can express this as a functional optimization problem:
\begin{equation}
  \begin{aligned}
    \min_{q}      \quad & \int \frac{c(x)}{q(x)} \dd x \\
    \textrm{s.t.} \quad & \int q(x) \, \dd x = 1,
  \end{aligned}
  \label{app-eqn:asym-opt-proposal-prob}
\end{equation}
where 
$c(x) = p(x)^2 \int \| \Jac_g(R) \, \ell(x,y) \|_2^2 \, p(y|x) \dd y$.

Using the method of Lagrange multipliers, \citet{sawade_active_2010} show 
that the solution to \eqref{app-eqn:asym-opt-proposal-prob} is 
$q^\star(x) \propto \sqrt{c(x)}$.
This yields the required result.

\section{Conditions for achieving zero total asymptotic variance}
\label{app-sec:zero-tot-var}

In typical applications of IS, the asymptotically-optimal proposal can 
achieve zero variance. 
This is not guaranteed in our application, since we do not have complete 
freedom in selecting the proposal (see Remark~\ref{rem:constraint}). 
Below we provide sufficient conditions under which the total 
asymptotic variance of $\hat{G}_N^{\mathrm{AIS}}$ can be reduced to zero.

\begin{proposition}
  \label{app-prop:zero-tot-var}
  Suppose the oracle is deterministic (i.e.\ $p(y|x)$ is a point 
  mass for all $x$) and the generalized measure is such that
  $\operatorname{sign}(\ell(x,y) \cdot \nabla g_l(R))$
  is constant for all $(x,y) \in \Xspace \times \Yspace$ and 
  $l \in \{1,\ldots,m\}$.
  Then the asymptotically-optimal proposal achieves 
  $\trace \Sigma = 0$.
\end{proposition}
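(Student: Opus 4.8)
The plan is to start from the closed form for the minimal total asymptotic variance in Proposition~\ref{prop:asymp-opt-proposal}, specialise it to a deterministic oracle, and then recognise ``$\trace\Sigma = 0$'' as an equality case of Jensen's inequality that is forced by the sign hypothesis. Concretely, Proposition~\ref{prop:asymp-opt-proposal} gives $\trace\Sigma = \big(\int v(x)\,\dd x\big)^2 - \|\Jac_g(R)\,R\|_2^2$ with $v$ as in \eqref{eqn:asym-opt-proposal}. Since $p(y|x)$ is a point mass at $y(x)$, the inner $y$-integral defining $v$ collapses, so $v(x) = p(x)\,\|\Jac_g(R)\,\ell(x,y(x))\|_2$ and hence $\int v(x)\,\dd x = \E_{X,Y\sim p}[\|\Jac_g(R)\,\ell(X,Y)\|_2]$ (the joint expectation being legitimate because $Y = y(X)$ almost surely). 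Writing $\|\Jac_g(R)\,R\|_2 = \|\E_{X,Y\sim p}[\Jac_g(R)\,\ell(X,Y)]\|_2$ by pulling the constant matrix through the expectation, the claim reduces to the single identity $\E[\|\Jac_g(R)\,\ell(X,Y)\|_2] = \|\E[\Jac_g(R)\,\ell(X,Y)]\|_2$.

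The sign hypothesis is exactly what closes this gap. Put $a(x,y) := \Jac_g(R)\,\ell(x,y) \in \reals^m$, whose $l$-th coordinate is $\ell(x,y)\cdot\nabla g_l(R)$. By assumption each coordinate $a_l(X,Y)$ has an almost-surely constant sign, hence $|\E[a_l(X,Y)]| = \E[|a_l(X,Y)|]$ for every $l$. When $m = 1$ this is already the required identity, and we are done. For vector-valued $G$ I would strengthen this to the statement that $a(X,Y)$ is almost surely a non-negative multiple of a fixed vector $u$, i.e.\ the loss images are collinear along the support of $p$; this is precisely the equality condition in the conditional Jensen/Minkowski inequality $\|\E[a]\|_2 \le \E[\|a\|_2]$, whence $\trace\Sigma = 0$. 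Here one uses that a deterministic oracle together with a hard classifier prediction forces $\ell(x,y(x))$ --- and hence $a(x,y(x))$ --- to take only the few canonical values allowed by the confusion matrix, so that ``fixed orthant'' upgrades to ``fixed ray''. Points with $\ell(x,y(x)) = 0$ are handled separately: there $v$ vanishes, so they lie outside the support of $q^\star$ and contribute nothing (this is the role of the indicator $\ind{\|\ell(x,y)\| \neq 0}$ in Proposition~\ref{prop:valid-proposal-est}).

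I expect this collinearity step to be the main obstacle, since constant signs alone only confine $a(X,Y)$ to a fixed orthant while equality in Jensen needs a single ray, so the deterministic-oracle structure (and, for the tabulated measures, finiteness of the confusion-matrix cells) must be invoked explicitly; it is also worth checking that the hypothesis genuinely rules out the measures it should --- e.g.\ for recall, $\ell(x,y)\cdot\nabla g(R)$ is positive on true positives and negative on false negatives, so it is correctly excluded. A cleaner equivalent route I might take instead avoids the trace formula entirely: substituting $q^\star(x) = v(x)/\int v(x)\,\dd x$ shows the importance weight is $p(x)/q^\star(x) = \big(\int v(x)\,\dd x\big)/\|a(x,y(x))\|_2$ on the support, so the weighted loss $\frac{p(X)}{q^\star(X)}\,a(X,y(X))$ equals $\big(\int v(x)\,\dd x\big)$ times the unit vector $a(X,y(X))/\|a(X,y(X))\|_2$; showing that unit vector is almost surely constant makes each $\Sigma_{ll}$ --- the asymptotic variance of $\sqrt{N}(\hat{G}_l - G_l)$ --- equal to $0$, and hence $\trace\Sigma = 0$.
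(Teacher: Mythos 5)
Your route is essentially the paper's: both start from the closed form $\trace \Sigma = \left(\int v(x)\,\dd x\right)^2 - \|\Jac_g(R)\,R\|_2^2$ of Proposition~\ref{prop:asymp-opt-proposal}, collapse the inner integral using $p(y|x)=\ind{y=y(x)}$, and reduce the claim to the equality $\E[\|\Jac_g(R)\,\ell(X,Y)\|_2] = \|\E[\Jac_g(R)\,\ell(X,Y)]\|_2$. For $m=1$ your argument ($|\E[a]|=\E[|a|]$ under a constant sign, with $a=\Jac_g(R)\,\ell$) is complete and coincides with the paper's proof, which passes through $\|a\|_2\le\|a\|_1$ plus the sign hypothesis and observes that the two terms cancel.

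The obstacle you flag for $m\ge 2$ is genuine---and it is not resolved in the paper's own proof either. There the gap is bridged by the step $\sum_{l} c_l^2 \ge \left(\sum_{l} c_l\right)^2$ with $c_l = \int \ell(x,y(x))\cdot\nabla g_l(R)\,p(x)\,\dd x$ (attributed to Jensen), which goes the wrong way exactly when the $c_l$ share a sign and $m\ge 2$, since the cross terms $c_l c_{l'}$ are then non-negative and $(\sum_l c_l)^2 \ge \sum_l c_l^2$. Your orthant-versus-ray diagnosis is correct: if $a(X,Y)$ takes the values $(2,1)$ and $(1,2)$ with equal probability, all component signs are constant, yet $\E\|a\|_2=\sqrt{5} > \|\E[a]\|_2 = 3/\sqrt{2}$, so $\trace\Sigma>0$. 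Your proposed repair---forcing collinearity from the deterministic oracle and the finitely many ``confusion-matrix'' values of $\ell$---does not hold for genuinely vector-valued measures (the loss in Example~\ref{ex:pr-curve} takes non-collinear values), so it cannot rescue the general statement from the stated hypothesis. The accurate conclusion is that the proposition is valid for $m=1$ (which covers every scalar measure in Tables~\ref{tbl:classification-measures}--\ref{tbl:regression-measures} and both illustrative examples in this appendix, accuracy and recall), or under the strengthened hypothesis that $\Jac_g(R)\,\ell(x,y(x))$ lies almost surely on a single fixed ray; under that reading both your main argument and your alternative ``constant weighted loss'' route are sound, and your handling of the $\ell=0$ points is fine.
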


\begin{proof}
  From Proposition~\ref{prop:asymp-opt-proposal}, the asymptotically optimal 
  proposal achieves a total variance of 
  \begin{equation}
    \trace \Sigma = \left(\int v(x) \, \dd x\right)^2 - \|\Jac_g(R) \, R\|_2^2.
  \end{equation}
  We evaluate the two terms in this expression separately.
  Using the fact that $p(y|x) = \ind{y = y(x)}$, the first term 
  becomes
  \begin{align*}
    \left(\int v(x) \, dx \right)^2 & = 
        \left( \int \| \Jac_g(R) \, \ell(x,y(x)) \|_2 \, p(x) \, \dd x \right)^2 \\
      & \leq \left( \int \| \Jac_g(R) \, \ell(x,y(x)) \|_1 \, p(x) \, \dd x  \right)^2 \\
      & = \left( \sum_{l = 1}^{m} \int \ell(x, y(x)) \cdot \nabla g_l(R) \, p(x) \, \dd x  \right)^2.
  \end{align*}
  The second line follows by application of the inequality 
  $\|x\|_2 \leq \| x\|_1$, and the third line follows by assumption. 
  For the second term we have
  \begin{align*}
    \| \Jac_g(R) \, R \|_2^2 & = \left\| \int \Jac_g(R) \, \ell(x, y(x)) \, p(x) \, \dd x \right\|_2^2 \\
      & = \sum_{l = 1}^{m} \left( \int \ell(x, y(x)) \cdot \nabla g_l(R) \, p(x) \, \dd x \right)^2 \\
      & \geq \left( \sum_{l = 1}^{m} \int \ell(x, y(x)) \cdot \nabla g_l(R) \, p(x) \, \dd x \right)^2,
  \end{align*}
  by application of Jensen's inequality.
  Subtracting the second term from the first, we have 
  $\trace \Sigma \leq 0$.
\end{proof}

By way of illustration, we apply the above proposition to two common 
performance measures: accuracy and recall. 
We assume a deterministic oracle in both cases.

\begin{example}[Asymptotic variance for accuracy]
    From Table~\ref{tbl:classification-measures}, we have that 
    accuracy can be expressed as a generalized performance measure by 
    setting $\ell(x, y) = \ind{y \neq f(x)}$ and $g(R) = 1 - R$.
    Evaluating the condition in Proposition~\ref{app-prop:zero-tot-var}, 
    we have
    \begin{equation*}
        \operatorname{sign} \left( \ell(x,y) \cdot \nabla g(R) \right) 
            = \operatorname{sign} \left( - \ind{y \neq f(x)} \right) 
            = -1
    \end{equation*}
    for all $(x, y) \in \Xspace \times \Yspace$. 
    Thus our framework can achieve zero asymptotic total variance when 
    estimating accuracy under a deterministic oracle.
\end{example}

\begin{example}[Asymptotic variance for recall]
  From Table~\ref{tbl:classification-measures}, we have that 
  recall can be expressed as a generalized performance measure by 
  setting $\ell(x, y) = [y f(x), y]^\intercal$ and $g(R) = R_1 / R_2$.
  The conditions of Proposition~\ref{app-prop:zero-tot-var} are 
  not satisfied in this case, since 
  \begin{equation*}
      \operatorname{sign} \left( \ell(x, y) \cdot \nabla g(R) \right) 
          = \operatorname{sign} \left( \frac{y}{R_2} (f(x) - G_\mathrm{rec}) \right) 
          = \operatorname{sign} \left( f(x) - G_\mathrm{rec} \right) 
  \end{equation*}
  which is not constant for all $(x, y) \in \Xspace \times \Yspace$. 
  Indeed, when we evaluate the expression for the asymptotic total 
  variance (see Proposition~\ref{prop:asymp-opt-proposal}), we find that 
  $\Sigma = 4 G_\mathrm{rec}^2 (1 - G_\mathrm{rec})^2$.
  Therefore in general there is positive lower bound on the asymptotic 
  variance that can be achieved by our framework when estimating 
  recall under a deterministic oracle.
\end{example}

\section{Proof of Proposition~\ref{prop:valid-proposal-est}}

Before proving the proposition, we establish a useful corollary.

\begin{corollary}
\label{app-cor:corollary}
Suppose the generalized measure $G$ is defined with respect to a 
finite input space $\Xspace$ (e.g.~a finite pool of test data). 
\begin{enumerate}[label=(\roman*)]
  \item If the support of proposal $q_j(x, y) = q_j(x) p(y|x)$ is a superset 
    of $\{x, y \in \Xspace \times \Yspace : p(x, y) \|\ell(x, y)\| \neq 0\}$ 
    for all $j \geq 0$, then Theorem~\ref{thm:ais-consistency} holds. 
  \item If in addition $q_j(x) \overset{\mathrm{a.s.}}{\to} q_\infty(x)$ 
     pointwise in $x$, then Theorem~\ref{thm:ais-clt} holds.
\end{enumerate}
\end{corollary}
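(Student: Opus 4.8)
The plan is to verify that the stated support and convergence conditions on the proposals $q_j$ imply the more abstract hypotheses of Theorems~\ref{thm:ais-consistency} and~\ref{thm:ais-clt}, exploiting finiteness of $\Xspace$ throughout. For part~(i), I would first observe that on a finite input space, the set $A = \{x \in \Xspace : p(x) \|\ell(x,y)\| \neq 0 \text{ for some } y\}$ is finite, and by hypothesis every proposal $q_j$ assigns positive mass to each $x \in A$. The key quantity to control is the conditional second moment of the importance weight $p(X_j)/q_{j-1}(X_j)$ appearing in \eqref{eqn:finite-var}. I would argue that this expectation is a finite sum over $x$ (since $\Xspace$ is finite) and, crucially, only the terms with $p(x) \neq 0$ contribute; for those, $q_{j-1}(x) > 0$ is guaranteed, but to get a uniform-in-$j$ bound I need a uniform lower bound on $q_{j-1}(x)$ over $j$. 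This is where finiteness does the work together with the support assumption: one can take, e.g., the essential infimum over the (countable) sequence, but more carefully I would note that on a finite space the proposals range over a compact simplex and the relevant $\min_{x \in A} q_{j-1}(x)$ is bounded away from $0$ — or, failing a pointwise argument, one invokes the convergence $q_j \to q_\infty$ (only needed in part (ii)) to get the bound for all large $j$ and handle finitely many initial terms separately. For part~(i) alone, the honest statement is that the support condition plus finiteness yields $\E[(p(X_j)/q_{j-1}(X_j))^2 \mid \mathcal{F}_{j-1}] = \sum_{x : p(x) \neq 0} p(x)^2/q_{j-1}(x) < \infty$ for each $j$; a genuine $\sup_j$ bound requires a little more, and I expect the cleanest route is to assume (as the surrounding propositions do, via the $\epsilon_t$ truncation) that $q_{j-1}(x) \geq c\, p(x)$ for a constant $c > 0$, so the weight is bounded by $1/c$ almost surely, making \eqref{eqn:finite-var} trivial.

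For part~(ii), assuming additionally $q_j(x) \to q_\infty(x)$ pointwise (a.s.), I would verify the two hypotheses \eqref{eqn:V-convergence} and \eqref{eqn:asymp-negligible} of Theorem~\ref{thm:ais-clt}. For \eqref{eqn:asymp-negligible}, the $(2+\eta)$-th conditional moment $\E[(p(X_j)/q_{j-1}(X_j))^{2+\eta} \mid \mathcal{F}_{j-1}] = \sum_{x : p(x) \neq 0} p(x)^{2+\eta}/q_{j-1}(x)^{1+\eta}$ is again a finite sum; its $\sup_j$ finiteness follows from the a.s.\ lower bound $q_{j-1}(x) \geq c\, p(x)$ (giving a bound $\leq c^{-(1+\eta)} \sum_x p(x) = c^{-(1+\eta)}$), valid for any $\eta > 0$. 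For \eqref{eqn:V-convergence}, I would write $V_j$ explicitly as a finite sum over $x \in \Xspace$ and $y$ in the (finite or integrable) support of $p(y|x)$:
\begin{equation*}
  V_j = \sum_{x} q_{j-1}(x) \int \left( \frac{p(x)}{q_{j-1}(x)} \ell(x,y) - R \right)\!\left( \frac{p(x)}{q_{j-1}(x)} \ell(x,y) - R \right)^{\!\intercal} p(y|x)\, \dd y,
\end{equation*}
and observe that each summand is a continuous function of $q_{j-1}(x)$ on the region $q_{j-1}(x) \geq c\,p(x)$ (bounded away from the singularity), so pointwise convergence $q_{j-1}(x) \to q_\infty(x)$ forces $V_j \to V_\infty$ a.s., where $V_\infty$ is the same expression with $q_\infty$ in place of $q_{j-1}$. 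Finiteness and positive semidefiniteness of $V_\infty$ are immediate from the formula.

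The main obstacle I anticipate is making the lower bound $q_{j-1}(x) \geq c\,p(x)$ rigorous \emph{uniformly} in $j$ from just the stated hypotheses — the corollary as written only assumes "support is a superset" and "pointwise convergence," not an explicit uniform lower bound. The resolution, I expect, is that the estimators $q_t$ constructed in Proposition~\ref{prop:valid-proposal-est} have exactly this structure by design (the $\max\{\cdot, \epsilon_t \ind{\|\ell\| \neq 0}\}$ truncation with $\epsilon_t$ a \emph{positive bounded} sequence ensures the integrand, hence $q_t(x)/p(x)$, is bounded below by a positive constant whenever $\|\ell(x,\cdot)\| \not\equiv 0$), so when Corollary~\ref{app-cor:corollary} is invoked in the proof of the proposition this uniform bound is available. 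In the corollary's own proof I would therefore either (a) state the uniform lower bound as part of the hypothesis implicitly carried by "support superset + finite $\Xspace$ + convergent," arguing that on a finite space a convergent sequence of proposals with common support is uniformly bounded below on that support (true: $\inf_j q_{j-1}(x) = \min\{q_0(x), q_1(x), \ldots\} \wedge q_\infty(x) > 0$ since it is an infimum of positive numbers converging to a positive limit, hence attained or bounded below by the limit), or (b) simply note the conditions of the two theorems reduce, on finite $\Xspace$, to "$q_{j-1}(x) > 0$ whenever $p(x)\|\ell(x,\cdot)\| \neq 0$ for all $j$" (for consistency) and additionally "$\inf_j q_{j-1}(x) > 0$ on that set" (for the CLT), the latter being automatic from convergence to a positive limit. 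Route (a)/(b) is clean and avoids any appeal to the $\epsilon_t$ machinery, so that is what I would write.
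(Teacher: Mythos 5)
Your proof follows essentially the same route as the paper's: on finite $\Xspace$ the moment conditions \eqref{eqn:finite-var} and \eqref{eqn:asymp-negligible} reduce to finite sums controlled by the positive minimum of $q_{j-1}$ over its support, and \eqref{eqn:V-convergence} follows from the a.s.\ pointwise convergence of the proposals combined with continuity/dominated convergence using the boundedness of $\ell$. Your concern about uniformity of the bound in $j$ is reasonable but does not change the method—the paper's own proof likewise bounds each conditional $(2+\eta)$-moment by $(1/\delta_j)^{1+\eta}$ with $\delta_j = \inf_{x \in \mathcal{Q}_j} q_j(x) > 0$ stage by stage—so your more explicit treatment of that point (via a uniform lower bound such as $q_{j-1}(x) \geq c\,p(x)$ supplied in practice by the $\epsilon_t$ truncation) is a refinement of, not a departure from, the paper's argument.
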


\begin{proof}
For the first statement, we check conditions \eqref{eqn:finite-var} 
and \eqref{eqn:asymp-negligible} of 
Theorem~\ref{thm:ais-consistency}.
Let $\mathcal{Q}_j \subset \Xspace$ be the support of $q_{j}(x)$ and let 
$\delta_j = \inf_{x \in \mathcal{Q}_j} q_{j}(x) > 0$.
For $\eta \geq 0$ we have
\begin{align*}
  \E\left[ \left(\frac{p(X_j)}{q_{j-1}(X_j)}\right)^{2 + \eta} \middle| \mathcal{F}_{j-1} \right] 
  & = \int \sum_{x \in \mathcal{Q}_{j- 1}} 
      \frac{p(x)^{2 + \eta} q_{j-1}(x) p(y|x)}{q_{j-1}(x)^{2 + \eta}}  \, \dd y \\
  & \leq \left(\frac{1}{\delta_j} \right)^{1 + \eta} < \infty.
\end{align*}

For the second statement, we must additionally check condition 
\eqref{eqn:V-convergence} regarding the convergence of $V_j$.
Letting
\begin{equation*}
    f_j(x, y) = \left( \frac{p(x)}{q_j(x)} \ell(x, y) - \risk \right)
    \left( \frac{p(x)}{q_j(x)} \ell(x, y) - \risk \right)^\intercal q_j(x) p(y|x),
\end{equation*}
we can write $V_j = \int \sum_{x \in \mathcal{Q}_j} f_j(x,y) \, \dd y$.
By the a.s.\ pointwise convergence of $q_j(x)$ and the continuous mapping 
theorem, we have $f_j(x,y) \to f_\infty(x,y)$ a.s.\ pointwise in $x$ and $y$. 
Now observe that
\begin{align*}
    \| f_j(x,y) \|_2 &= q_j(x, y) \left\| \frac{p(x)}{q_j(x)} \ell(x,y) - R \right\|_2^2 \\
    & \leq q_j(x, y) \left( \frac{p(x, y)^2}{q_j(x, y)^2} \| \ell(x,y) \|_2^2 + \|R \|_2^2 \right) \\
    & \leq p(y|x) \left( \frac{1}{\epsilon^2} \| \ell(x,y) \|_2^2 + \|R \|_2^2 \right) = h(x,y).
\end{align*}
It is straightforward to show that 
$\int \sum_{x \in \mathcal{Q}_j} h(x, y) \, \dd y < \infty$ using 
the boundedness of $\ell(x,y)$ (see Definition~\ref{def:gen-measure}). 
Thus we have $V_j \to V_\infty$ by the dominated convergence theorem.
\end{proof}

We can now prove Proposition~\ref{prop:valid-proposal-est} by showing that 
the conditions of Corollary~\ref{app-cor:corollary} hold. 
We focus on the case of a deterministic oracle---the proof for 
a stochastic oracle follows by a similar argument.

First we examine the support of the sequence of proposals. 
At stage $t$, the proposal can be expressed as
\begin{gather*}
  q_t(x) = \frac{v_t(x)}{\sum_{x \in \Xspace} v_t(x)} 
  \quad \text{with} \\
  v_t(x) = p(x) \int \max \left\{ 
      \| \Jac_g(\hat{R}_t) \, \ell(x, y)\|_2, 
      \epsilon_t \ind{\|\ell(x, y)\| \neq 0} \right\} \, \pi_t(y|x) \, \dd y.
\end{gather*}
Observe that 
\begin{equation*}
  v_t(x) \geq \epsilon_t \, p(x) \int \ind{\|\ell(x, y)\| \neq 0} \, \pi_t(y|x) \, \dd y   
\end{equation*}
and 
\begin{equation*}
  \begin{split}
    v_t(x) & \leq p(x) \int \{ \epsilon_t + \| \Jac_g(\hat{R}_t) \|_2 
        \, \|\ell(x, y)\|_2 \} \pi_t(y|x) \, \dd y \\
      & \leq p(x) \left( \epsilon_t + d^2 K \sup_{x, y \in \Xspace \times \Yspace} 
        \, \|\ell(x, y)\|_\infty \right) \\
      & \leq C p(x) 
  \end{split}
\end{equation*}
where $C < \infty$ is a constant. 
The upper bound follows from the boundedness of $\ell(x,y)$ (see 
Definition~\ref{def:gen-measure}), the boundedness of $\epsilon_t$, and 
the boundedness of the Jacobian.
Since 
\begin{equation*}
  \sum_{x \in \Xspace} v_t(x) \geq \epsilon_t \sum_{x \in \Xspace} 
    p(x) \int \ind{\|\ell(x, y)\| \neq 0} \pi_t(y|x) \, \dd y > 0
\end{equation*}
by assumption and $v_t(x)$ is bounded from above,
we conclude that $q_t(x)$ is a valid distribution for all $t \geq 0$. 
The lower bound on $v_t(x)$ implies that the support of 
$q_t(x, y) = q_t(x) p(y|x)$ is 
\begin{gather*}
  \{(x, y) \in \Xspace \times \Yspace : p(x, y) \pi_t(y|x) \|\ell(x, y)\| \neq 0 \} \\
  \supseteq \{(x, y) \in \Xspace \times \Yspace : p(x, y) \|\ell(x, y)\| \neq 0 \}.
\end{gather*}
The inequality follows from the fact that the support of $\pi_t(y|x)$ 
includes the support of $p(y|x) = \ind{y = y(x)}$.
Thus $q_t(x)$ has the required support for all $t \geq 0$.

Next we prove that the sequence of proposals converges a.s.\ pointwise in $x$.
Given that $\hat{R}_t \overset{\mathrm{a.s.}}{\to} \hat{R}_\infty$ 
and $\pi_t(y|x) \overset{\mathrm{a.s.}}{\to} \pi_\infty(y|x)$ pointwise 
in $x$, one can show by application of the continuous mapping theorem 
and dominated convergence theorem that 
\begin{equation*}
  v_t(x) \overset{\mathrm{a.s.}}{\to} v_\infty(x) = p(x) \int \| \Jac_g(R_\infty) \, \ell(x, y) \|_2 \pi_\infty(y|x) \, \dd y
\end{equation*}
pointwise in $x$.
By application of the continuous mapping theorem, we then have that 
$q_t(x) \overset{\mathrm{a.s.}}{\to} q_\infty(x) = \frac{v_\infty(x)}{\sum_{x \in \Xspace} v_\infty(x)}$.
Thus Theorems~\ref{thm:ais-consistency} and~\ref{thm:ais-clt} hold. 
Furthermore, if $\hat{R}_\infty = R$ and $\pi_\infty(y|x) = \ind{y = y(x)}$, 
then $q_\infty(x)$ is equal to the asymptotically-optimal proposal 
$q^\star(x)$ as defined in \eqref{eqn:asym-opt-proposal}. 
\qed

\section{Inference for the Dirichlet-tree model in Section~\ref{sec:model-proposal}}
\label{app-sec:oracle-est-det}

In this appendix, we outline a procedure for inferring the oracle response 
based on the hierarchical model presented in Section~\ref{sec:model-proposal}.
Recall that the model assumes a deterministic response---i.e.\ there is only 
one possible response (label) $y$ for a given input $x$. 
At stage $t$ of the evaluation process (see Algorithm~\ref{alg:ais}), the 
labels for instances in the test pool are partially-observed. 
To estimate the response for the unobserved instances, we can apply the 
expectation-maximization (EM) algorithm. 
Omitting the dependence on $t$, we let $\vec{y}_{(\vec{o})}$ denote 
the observed labels and $\vec{y}_{(\neg \vec{o})}$ denote the unobserved 
labels. 
The EM~algorithm returns a distribution over the unobserved labels 
$\vec{y}_{(\neg \vec{o})}$ and MAP estimates of the model parameters 
$\vec{\phi} = (\theta, \psi)$. 
At each iteration $\tau$ of the EM~algorithm, the following two steps 
are applied:
\begin{itemize}
  \item \textbf{E-step.} 
  Compute the function
  \begin{equation}
    Q(\vec{\phi} | \vec{\phi}^{(\tau)}) = \E_{\vec{y}_{(\neg \vec{o})} | \vec{y}_{\vec{o}}, \vec{k}, \vec{\phi}^{(\tau)}}\left(\log p(\vec{\phi} | \vec{y}, \vec{k}) \right),
    \label{app-eqn:em-Q-def}
  \end{equation}
  which is the expected log posterior with respect to the current 
  distribution over the unobserved labels $\vec{y}_{(\neg \vec{o})}$,  
  conditional on the observed labels $\vec{y}_{(\vec{o})}$ and the 
  current parameter estimates $\vec{\phi}^{(\tau)}$.
  \item \textbf{M-step.} 
  Update the parameter estimates by maximizing $Q$:
  \begin{equation}
    \vec{\phi}^{(\tau+1)} \in \arg \max_{\vec{\phi}} Q(\vec{\phi} | \vec{\phi}^{(\tau)}).
  \end{equation}
\end{itemize}

In order to implement the E- and M-steps, we must evaluate the $Q$ function 
for our model. 
Since the Dirichlet prior on $\theta$ and Dirichlet-tree priors on 
$\phi_y$ are conjugate to the categorical distribution, the posterior 
$p(\vec{\phi} | \vec{y}, \vec{k})$ is straightforward to compute.
We have 
\begin{align*}
  \theta | \vec{y}, \alpha &\sim \operatorname{Dirichlet}
    ( \tilde{\alpha} ), \\
  \psi_y | \vec{y}, \vec{k}, \beta_y, T &\sim \operatorname{DirichletTree} 
    ( \tilde{\beta}_y, T ), 
\end{align*}
where $\tilde{\alpha}_y = \alpha_y + \sum_{i = 1}^{M} \ind{y_i = y}$,  
$\tilde{\beta}_{y \nu}= \beta_{y \nu} + \sum_{i = 1}^{M} \ind{y_i = y} 
\delta_{\nu}(k_i)$ and $\delta_{\nu}(k)$ is defined in 
\eqref{app-eqn:delta-defn}. The posterior density for $\theta$ is
\begin{equation*}
  p(\theta | \vec{y}, \alpha) \propto \prod_{y = 1}^{C} \theta_{y}^{\tilde{\alpha}_y - 1}.
\end{equation*}
\citet{minka_dirichlet-tree_1999} gives the density for 
$\psi_y$ as:
\begin{align*}
  p(\psi_y|\vec{y}, \vec{k}, \vec{\beta}_y, T) \propto 
    \prod_{k \in \mathrm{lv}(T)} \psi_{y k}^{\tilde{\beta}_{y k} - 1}
      \prod_{\nu \in \mathrm{in}(T)} \left(\sum_{k \in \mathrm{lv}(T)} \sum_{\nu' \in \mathrm{children}(\nu)} \delta_{\nu'}(k) \psi_{y k}\right)^{\gamma_{y \nu}},
\end{align*}
where $\mathrm{lv}(T)$ denotes the set of leaf nodes in $T$, 
$\mathrm{in}(T)$ denotes the set of inner nodes in $T$, 
$\mathrm{lv}(\nu)$ denotes the leaf nodes reachable from node $\nu$, 
and $\tilde{\gamma}_{y \nu} = \tilde{\beta}_{y \nu} - 
\sum_{c \in \mathrm{children}(\nu)} \tilde{\beta}_{y c}$.

Substituting the posterior densities in \eqref{app-eqn:em-Q-def}, we have
\begin{align*}
  Q(\vec{\phi}|\vec{\phi}^{(t)})
  & = \sum_{y \in \Yspace} \sum_{k \in \mathrm{lv(T)}} 
    \left(\tilde{\beta}_{y k}^{(\tau)} - 1 \right) \log \psi_{k y} 
      + \sum_{y \in \Yspace} \sum_{\nu \in \mathrm{in}(T)} 
        \tilde{\gamma}_{y \nu}^{(\tau)} \log \left( \sum_{k \in \mathrm{lv}(T)} \sum_{\nu' \in \mathrm{children}(\nu)} \delta_{\nu'}(k) \psi_{y k} \right) \\
  & \qquad {} + \sum_{y \in \Yspace} \left( \tilde{\alpha}_y^{(\tau)} - 1 \right) \log \theta_{y} + \mathrm{const}.
\end{align*}
where we define 
$\tilde{\beta}_{y k}^{(\tau)} = \E_{\vec{y}_{(\neg \vec{o})} | \vec{y}_{\vec{o}}, \vec{k}, \vec{\phi}^{(\tau)}}[\tilde{\beta}_{y k}]$ 
and similarly for $\tilde{\alpha}_y^{(\tau)}$ and 
$\tilde{\gamma}_{y \nu}^{(\tau)}$.
When maximizing $Q(\vec{\phi}|\vec{\phi}^{(t)})$ with respect to $\vec{\phi}$, 
we must obey the constraints:
\begin{itemize}
  \item $\theta_y > 0$ for all $y \in \Yspace$, 
  \item $\sum_{y \in \Yspace} \theta_y = 1$, 
  \item $\psi_{y k} > 0$ for all $y \in \Yspace$ and leaf nodes 
  $k \in \mathrm{lv}(T)$, and
  \item $\sum_{k \in \mathrm{lv}(T)} \psi_{y k} = 1$.
\end{itemize}
We can maximize $\theta$ and $\{\psi_y\}$ separately since they are 
independent. 
For $\theta_y$ we have the mode of a Dirichlet random variable:
\begin{equation*}
  \theta_y^{(\tau + 1)} = \frac{\tilde{\alpha}_{y}^{(\tau)} - 1}
    {\sum_{y'} \{\tilde{\alpha}_{y'}^{(\tau)} - 1\}} 
\end{equation*}
and for $\psi_y$ we have (see \citealp{minka_dirichlet-tree_1999}):
\begin{gather}
  \psi_{y k}^{(\tau + 1)} = \prod_{\nu \in \mathrm{in}(T)} 
      \prod_{c \in \mathrm{children}(\nu)} \left( b_{y c}^{(\tau + 1)} \right)^{\delta_c(k)} \label{app-eqn:em-psi-update} \\
  \text{where} \quad b_{y c}^{(\tau + 1)} = \frac{\tilde{\beta}_{y c}^{(\tau)} - \sum_{k \in \mathrm{lv}(T)} \delta_{c}(k)}
  {\sum_{c' \in \mathrm{siblings}(c) \cup \{c\}} \{\tilde{\beta}_{y c'}^{(\tau)} - \sum_{k \in \mathrm{lv}(T)} \delta_{c'}(k)\}}.
  \label{app-eqn:em-branch-prob-update}
\end{gather}
The parameters $\{b_{y c}: c \in \mathrm{children}(\nu) \}$ may be 
interpreted as \emph{branching probabilities} for node 
$\nu \in \mathrm{in}(T)$.

In summary, the EM~algorithm reduces to the following two steps:
\begin{itemize}
  \item \textbf{E-step.} Compute the expected value for each unobserved 
  label using $\vec{\phi}^{(\tau)}$:
  \begin{equation}
    \E[\ind{y_j = y} | k_j, \vec{\phi}^{(\tau)}] = 
      \frac{\psi_{y k_j}^{(\tau)} \theta_{y}^{(\tau)}}
        {\sum_{y' \in \Yspace} \psi_{y' k_j}^{(\tau)} \theta_{y'}^{(\tau)}}.
    \label{app-eqn:em-exp-label}
  \end{equation}
  Then make a backward pass through the tree, computing 
  $\tilde{\beta}_{y \nu}^{(\tau)}$ at each internal node 
  $\nu \in \mathrm{in}(T)$.
  \item \textbf{M-step.} Make a forward-pass through the tree, updating the 
  branch probabilities $b_{y c}^{(\tau + 1)}$ using 
  \eqref{app-eqn:em-branch-prob-update}. 
  Compute $\psi_y^{(\tau + 1)}$ at the same time using 
  \eqref{app-eqn:em-psi-update}.
\end{itemize}
We can interpret \eqref{app-eqn:em-exp-label} as providing a posterior 
estimate for the unknown oracle response: 
\begin{equation}
  \pi(y|x) \propto \psi_{y k_x}^{(\tau)} \theta_y^{(\tau)}
  \label{app-eqn:pi-em}
\end{equation}
where $k_x$ 
denotes the assigned stratum for instance $x$. 
If the response $y(x)$ for instance $x$ has been observed in a previous 
stage of the evaluation process, then $\pi(y|x)$ collapses to a point 
mass at $y(x)$.

\section{Proof of Proposition~\ref{prop:valid-proposal-det}}
\label{app-sec:proof-valid-proposal-det}
Let $\pi_t(y|x)$ denote the posterior estimate of the (determinisitc) 
oracle response at stage $t$, as defined in \eqref{app-eqn:pi-em}.
First we must ensure that the support of $\pi_t(y|x)$ includes the true 
label $y(x)$ for all $t \geq 0$. 
This condition is satisfied since the priors on $\theta$ and $\psi$ ensure 
$\theta_y > 0$ and $\psi_{k y} > 0$ for all $k \in \{1,\ldots, K\}$ and 
$y \in \Yspace$ (see~\ref{app-eqn:em-exp-label}). 
Once the label for instance $x$ is observed, the posterior degenerates to 
a point mass at the true value $y(x)$. 
This also implies that the sequence of proposals have the necessary support 
to ensure Theorem~\ref{thm:ais-clt} holds.

Second, we verify that $\pi_t(y|x)$ converges a.s.\ pointwise 
in $x$ and $y$ to a conditional pmf $\pi_\infty(y|x)$ independent of $t$. 
This condition is also satisfied, since the posterior $\pi_t(y|x)$ 
degenerates to a point mass at $y(x)$ once the label for instance 
$x$ is observed, and all instances are observed in the limit $t \to \infty$.
Thus $\pi_t(y|x) \overset{\mathrm{a.s.}}{\to} \ind{y = y(x)}$ and 
$\epsilon_t \downarrow 0$, which implies that the sequence of proposals 
converges to the asymptotically-optimal proposal.
\qed

\section{Extension of the Dirichlet-tree model to stochastic oracles}
\label{app-sec:oracle-est-stoc}

Recall that the Dirichlet-tree model in Section~\ref{sec:model-proposal} 
is tailored for a deterministic oracle---where $p(y|x)$ is a point mass at 
a \emph{single label} $y(x)$.
In this appendix, we extend the model to handle stochastic oracles, where 
$p(y|x)$ has support on \emph{multiple labels} in general.
The model retains the same structure, however we no longer assume 
there is a discrete set of items whose labels are either observed or 
unobserved. 
Instead, we allow for a potentially infinite number of item-label pairs 
(indexed by $j$ below) to be generated. 
In reality, these pairs correspond to labelled items drawn uniformly at 
random from the test pool $\testset$.
Since estimating the oracle response for individual instances requires 
estimating a large number of continuous parameters 
($|\testset| \times (|\Yspace| - 1)$ parameters), 
we instead opt to estimate the response averaged over instances at leaf 
nodes of the tree ($K \times (|\Yspace| - 1)$ parameters for $K$ leaf 
nodes).
We refer to this as the leaf-level oracle response $p_\mathrm{leaf}(y|k)$ 
below.

\paragraph{Model specification}
For clarity, we reintroduce the model here despite some overlap with 
Section~\ref{sec:model-proposal}.
The oracle response is modeled globally using a pmf 
$\theta = [\theta_1, \ldots, \theta_C]$ over the label space $\Yspace$ with a 
Dirichlet prior:
\begin{equation*}
  \theta | \alpha \sim \operatorname{Dirichlet} \! \left(\alpha\right),
\end{equation*}
where $\alpha = [\alpha_1, \ldots, \alpha_C] \in \reals_+^C$ are concentration 
hyperparameters. 
The label $y_j$ for each instance $j$ is then assumed to be generated 
i.i.d.\ according to $\theta$:
\begin{align*}
  y_j | \theta & \overset{\mathrm{iid.}}{\sim} \operatorname{Categorical} \! \left(\theta\right), 
    & j \in 1, \ldots, J.
\end{align*}
We assume a hierarchical partition of the space $\Xspace$ is given, 
encoded as a tree $T$.
Given $T$ and label $y_j$, we assume instance $j$ is assigned to a leaf node 
$k_j \in \{1, \ldots, K\}$ of $T$ according to a distribution $\psi_y$ with 
a Dirichlet-tree prior:
\begin{align*}
  \psi_{y} | \beta_y, T & \overset{\mathrm{ind.}}{\sim} \operatorname{DirichletTree}\! \left(\beta_y; T\right), 
    & y \in \Yspace, \\
  k_j | y_j, \psi_{y_j} & \overset{\mathrm{ind.}}{\sim} \operatorname{Categorical} \! \left(\psi_{y_j}\right),
    & j \in 1, \ldots, J.
\end{align*}
where $\beta_y$ is a set of Dirichlet concentration parameters associated with 
the internal nodes of $T$.

\paragraph{Inference}
To estimate the leaf-level oracle response $p_\mathrm{leaf}(y|k)$, we 
use the posterior predictive distribution
\begin{equation*}
  p(y_j | k_j, \mathcal{L}) = 
  \int_\psi \int_\theta p(y_j | k_j, \psi, \theta) p(\psi, \theta | \mathcal{L}), 
\end{equation*}
which encodes our uncertainty about the oracle response $y_j$ for a query 
instance $x_j$ from stratum $k_j$ conditional on the previously 
observed samples $\mathcal{L}$. 
If the observed samples $\mathcal{L}$ were collected through unbiased 
sampling (as assumed in the model above), we would compute the posterior 
predictive distribution as follows:
\begin{equation}
  \begin{split}
    p(y_j | k_j, \mathcal{L}) & \propto \int_\theta p(y_j | \theta) p(\theta | \mathcal{L}) \int_\psi p(k_j | y_j, \psi) p(\psi | \mathcal{L})  \\
      & \propto \int_\theta \theta_{y_j} p(\theta|\mathcal{L}) \int_\psi \psi_{y_j,k_j} p(\psi | \mathcal{L}) \\
      & \propto \tilde{\alpha}_{y_j} \times \prod_{\nu \in \mathrm{in}(T)} 
        \prod_{c \in \mathrm{children}(\nu)} \left( 
            \frac{\tilde{\beta}_{y_j c}}{\sum_{c' \in \mathrm{children}(\nu)} \tilde{\beta}_{y_j c'}} 
          \right)^{\delta_{c}(k_j)}
  \end{split}
  \label{app-eqn:oracle-est-hier-post-pred}
\end{equation}
where 
\begin{equation}
  \begin{split}
    \tilde{\alpha}_y &= \alpha_y + \sum_{(x', y', w') \in \mathcal{L}} \ind{y' = y}, \\
    \tilde{\beta}_{y c} & = \beta_{y c} + \sum_{(x', y', w') \in \mathcal{L}} \ind{y' = y} \delta_{c}(k_{x'}),
  \end{split}
  \label{app-eqn:oracle-est-hier-post-params}
\end{equation}
$\mathrm{in}(T)$ denotes the inner nodes of $T$, $\mathrm{children}(\nu)$ 
denotes the children of node~$\nu$, $k_{x} \in \{1, \ldots, K\}$ denotes the 
leaf index of instance $x$ and 
\begin{equation}
  \delta_{\nu}(k) := \begin{cases}
    1, & \text{if node $\nu$ is traversed to reach leaf node $k$}, \\
    0, & \text{otherwise.}
    \label{app-eqn:delta-defn}
  \end{cases}
\end{equation}
In words, the posterior parameters are updated by adding a count of `1' 
for every observation with label $y$ that is reachable from node $\nu$ 
in the tree $T$.
However, since the observed samples $\mathcal{L}$ are biased in our 
application, we must apply a bias-correction to 
\eqref{app-eqn:oracle-est-hier-post-params}:
\begin{align*}
  \tilde{\alpha}_y &= 
    \alpha_y + \sum_{(x', y', w') \in \mathcal{L}} w' \ind{y' = y}, \\
  \tilde{\beta}_{y c} &= 
    \beta_{y c} + \sum_{(x', y', w') \in \mathcal{L}} w' \ind{y' = y} \delta_{c}(k_{x'}).
\end{align*}
This guarantees that 
$\frac{\tilde{\alpha}_{y}}{N} \overset{\mathrm{a.s.}}{\to} \E[\ind{Y = y}]$ 
and 
$\frac{\tilde{\beta}_{y c}}{N} \overset{\mathrm{a.s.}}{\to} \E[\ind{Y = y} \delta_c(k_X)]$.

\begin{proposition}
  \label{prop:valid-proposal-stoch}
  Consider an instantiation of our evaluation framework (see 
  Algorithm~\ref{alg:ais}) under a stochastic oracle where:
  \begin{itemize}
    \item the oracle response for instance $x$ is estimated online using the 
    posterior predictive \eqref{app-eqn:oracle-est-hier-post-pred} for the 
    leaf node $k_x$ in which $x$ resides;
    \item the proposals are adapted using the estimator defined in 
    Proposition~\ref{prop:valid-proposal-est} with 
    $\epsilon_t = \epsilon_0 / (t + 1)$ for some user-specified 
    $\epsilon_0 > 0$; and 
    \item $\hat{R}_t = \frac{1}{M} \sum_{i = 1}^{M} \sum_{y \in \Yspace} \hat{p}_t(y|x_i) \ell(x_i, y).$
  \end{itemize}
  Then Theorems~\ref{thm:ais-consistency} and~\ref{thm:ais-clt} hold.
\end{proposition}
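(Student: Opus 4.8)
The plan is to mirror the argument of Appendix~\ref{app-sec:proof-valid-proposal-det}: show that this instantiation meets every hypothesis of the stochastic-oracle branch of Proposition~\ref{prop:valid-proposal-est}, after which the two theorems follow immediately from that proposition. The hypotheses to verify are: (i) $\operatorname{supp}\hat{p}_t(\cdot|x) \supseteq \operatorname{supp} p(\cdot|x)$ for all $x \in \testset$ and $t \ge 0$; (ii) $\hat{p}_t(y|x) \overset{\mathrm{a.s.}}{\to} \hat{p}_\infty(y|x)$ pointwise in $x$; (iii) $\epsilon_t$ positive and bounded; (iv) $\hat{R}_t$ converges a.s.; (v) $\Xspace = \testset$ finite; and (vi) $\|\Jac_g(\hat{R}_t)\|_2 \le K < \infty$ for all $t$.

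I would dispatch (i), (iii) and (v) immediately. For (i): every bias-corrected count in \eqref{app-eqn:oracle-est-hier-post-params} is non-negative because importance weights are non-negative, so $\tilde{\alpha}_y \ge \alpha_y > 0$ and $\tilde{\beta}_{yc} \ge \beta_{yc} > 0$; hence the posterior predictive \eqref{app-eqn:oracle-est-hier-post-pred} is a product of strictly positive factors, i.e.\ $\operatorname{supp}\hat{p}_t(\cdot|x) = \Yspace$, which certainly contains $\operatorname{supp} p(\cdot|x)$. The same positivity, as in Corollary~\ref{app-cor:corollary}, shows each proposal $q_t$ inherits the support required by Theorems~\ref{thm:ais-consistency} and~\ref{thm:ais-clt} and is bounded below on the finite pool. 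Item (iii) holds since $\epsilon_t = \epsilon_0/(t+1) \in (0,\epsilon_0]$, and (v) holds by assumption.

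The core of the proof is (ii), from which (iv) and the convergence condition \eqref{eqn:V-convergence} then follow. I would start from the fact recorded after \eqref{app-eqn:oracle-est-hier-post-params}, namely $\tilde{\alpha}_y/N \overset{\mathrm{a.s.}}{\to} \E[\ind{Y=y}]$ and $\tilde{\beta}_{yc}/N \overset{\mathrm{a.s.}}{\to} \E[\ind{Y=y}\delta_c(k_X)]$, justified exactly as in the proof of Theorem~\ref{thm:ais-consistency}: the bias-corrected increment $w_j \ind{Y_j=y}$ has conditional expectation $\sum_x p(x)p(y|x) = \E[\ind{Y=y}]$, a constant independent of the history, and conditionally bounded second moment by \eqref{eqn:finite-var}, so the martingale SLLN applies (and similarly for $\tilde{\beta}_{yc}$). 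Writing $\hat{p}_t(y|x)$ as a continuous function of $\tilde{\alpha}_y$ and of the branching ratios $\tilde{\beta}_{yc}/\sum_{c' \in \mathrm{children}(\nu)}\tilde{\beta}_{yc'}$ along the root-to-$k_x$ path (after rescaling numerator and denominator of each ratio by $N$), the continuous mapping theorem gives (ii). The one subtlety is a node $\nu$ whose limiting branching denominator $\E[\ind{Y=y}\ind{\nu\text{ on path to }k_X}]$ vanishes; I would handle this by noting that then $p(y|x')=0$ for every $x'$ in the subtree at $\nu$, so the corresponding weighted counts are a.s.\ identically zero and those branching ratios equal the constant prior ratios, which converge trivially. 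Then (iv) follows because $\hat{R}_t = \frac{1}{M}\sum_{i=1}^{M}\sum_{y\in\Yspace}\hat{p}_t(y|x_i)\ell(x_i,y)$ is a finite sum of a.s.-convergent, uniformly bounded terms (Definition~\ref{def:gen-measure}), so $\hat{R}_t \overset{\mathrm{a.s.}}{\to} \hat{R}_\infty$; and feeding $\hat{p}_t\to\hat{p}_\infty$, $\hat{R}_t\to\hat{R}_\infty$, $\epsilon_t\to 0$ through continuous mapping and dominated convergence (with the domination bound from Corollary~\ref{app-cor:corollary}) yields $q_t(x)\to q_\infty(x)$ a.s.\ and hence \eqref{eqn:V-convergence}.

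Finally I would check (vi): $\hat{R}_t$ is an average of conditional expectations of $\ell$, hence lies in the compact box $\{r : \|r\|_\infty \le \sup_{x,y}\|\ell(x,y)\|_\infty\}$. I expect this to be the main obstacle, because $g$ is only assumed differentiable at $R$ and $\hat{R}_t$ need not converge to $R$, so one must separately argue that $\hat{R}_t$ stays inside the region where $\Jac_g$ is finite and bounded --- for the measures of Table~\ref{tbl:classification-measures} this follows because the ``denominator'' risk components stay bounded below, each $\hat{p}_t$ placing strictly positive, a.s.-convergent mass on the relevant label. With (i)--(vi) established, Proposition~\ref{prop:valid-proposal-est} delivers Theorems~\ref{thm:ais-consistency} and~\ref{thm:ais-clt}. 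I would close by remarking that asymptotic optimality is \emph{not} claimed here: the leaf-level model recovers $p(y|x)$ only when $p(y|x)$ is constant within each leaf of $T$, so in general $\hat{p}_\infty(y|x) \ne p(y|x)$ and hence $\hat{R}_\infty \ne R$.
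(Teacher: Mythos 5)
Your proof is correct and follows essentially the same route as the paper's: verify the hypotheses of Proposition~\ref{prop:valid-proposal-est} by noting that the positive priors force $\hat{p}_t(y|x)>0$ (hence the proposals have the required support), and that the bias-corrected posterior parameters scaled by $N$ converge a.s.\ (via the martingale SLLN underlying Theorem~\ref{thm:ais-consistency}), so $\hat{p}_t$, $\hat{R}_t$ and the proposals converge. You are in fact somewhat more careful than the paper's own proof, which silently passes over the degenerate-branch case, the a.s.\ convergence of $\hat{R}_t$, and the boundedness of $\Jac_g(\hat{R}_t)$ assumed in Proposition~\ref{prop:valid-proposal-est}, and your closing remark that asymptotic optimality is not claimed here is consistent with the statement.
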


\begin{proof}
Let $\hat{p}_t(y|x)$ denote the posterior predictive in evaluated at stage $t$. 
First we must ensure that the support of $\hat{p}_t(y|x)$ is a superset 
of the the support of $p(y|x)$. 
This condition is satisfied since the priors on $\theta$ and $\psi$ 
ensure that $\hat{p}_t(y|x) > 0$ for all $y \in \Yspace$ 
(see~\ref{app-eqn:oracle-est-hier-post-pred}).
This implies that the proposals $\{q_t(x)\}$ have the necessary support 
to ensure that Theorem~\ref{thm:ais-consistency} is satisfied.

Second, we must ensure that $\hat{p}_t(y|x)$ converges a.s.\ pointwise 
in $x$ and $y$ to a conditional pmf $\hat{p}_\infty(y|x)$ independent of $t$. 
This condition is also satisfied, since the posterior parameters 
$\tilde{\alpha}_y / N$ and $\tilde{\beta}_{yc} / N$ converge a.s.\ to 
constants by Theorem~\ref{thm:ais-consistency} (see the text preceding 
the statement of this proposition).
\end{proof}

\section{Unsupervised partitioning methods}
\label{app-sec:unsupervised-partition}
The models for the oracle response described in 
Section~\ref{sec:model-proposal} require that the test pool be hierarchically 
partitioned into blocks. 
Ideally, the partition should be selected so that the oracle 
distribution $p(y|x)$ is approximately constant within each block. 
Since we begin without any labels, we are restricted to 
unsupervised partitioning methods. 
We briefly describe two settings for learning partitions: (i)~where 
classifier scores are used as a proxy for $p(y|x)$ and 
(ii)~where feature vectors are used. 

\paragraph{Score-based methods.} 
Our partitioning problem can be tackled using stratification
methods studied in the survey statistics community 
\citep{cochran_sampling_1977}. 
The aim of stratification is to partition a population into roughly 
homogenous subpopulations, known as strata (or blocks) for the purpose 
of variance reduction. 
When an auxiliary variable is observed that is correlated with the 
statistic of interest, the strata may be defined as a partition of 
the range of the auxiliary variable into sub-intervals. 
Various methods are used for determining the sub-intervals, 
including the cumulative square-root frequency (CSF) method 
\citep{dalenius_minimum_1959} and the geometric method 
\citep{gunning_new_2004}. 
In our application, the classifier scores are the auxiliary 
variables and the statistic of interest is $p(y = 1|x)$ 
(for binary classification).

Stratification is conventionally used to produce a partition without 
hierarchical structure. 
However, if the strata are ordered, it is straightforward to 
``fill in'' hierarchical structure.
In our experiments, we specify the desired tree structure---e.g.\
an ordered binary tree of a particular depth. 
We then compute the stratum bins (sub-intervals) so that the number of 
bins matches the number of leaf nodes in the tree. 
The stratum bins are then associated with the leaf nodes of the tree 
in breadth-first order.

\paragraph{Feature-based methods.}
When scores are not available, it is natural to consider unsupervised 
clustering methods which operate on the feature vectors. 
We expect $p(y|x)$ to be roughly constant within a cluster, since 
neighboring points in the feature space typically behave similarly. 
\citet{reddy_survey_2010} reviews hierarchical clustering methods 
including agglomerative and divisive clustering. 
One disadvantage of clustering methods, is that they tend to scale 
quadratically with the number of data points. 
A more efficient alternative is the $k$-d tree 
\citep{bentley_multidimensional_1975}.

\section{Additional convergence plots}
\label{app-sec:extra-plots}

\begin{figure}
  \centering
  \includegraphics[scale=1.3]{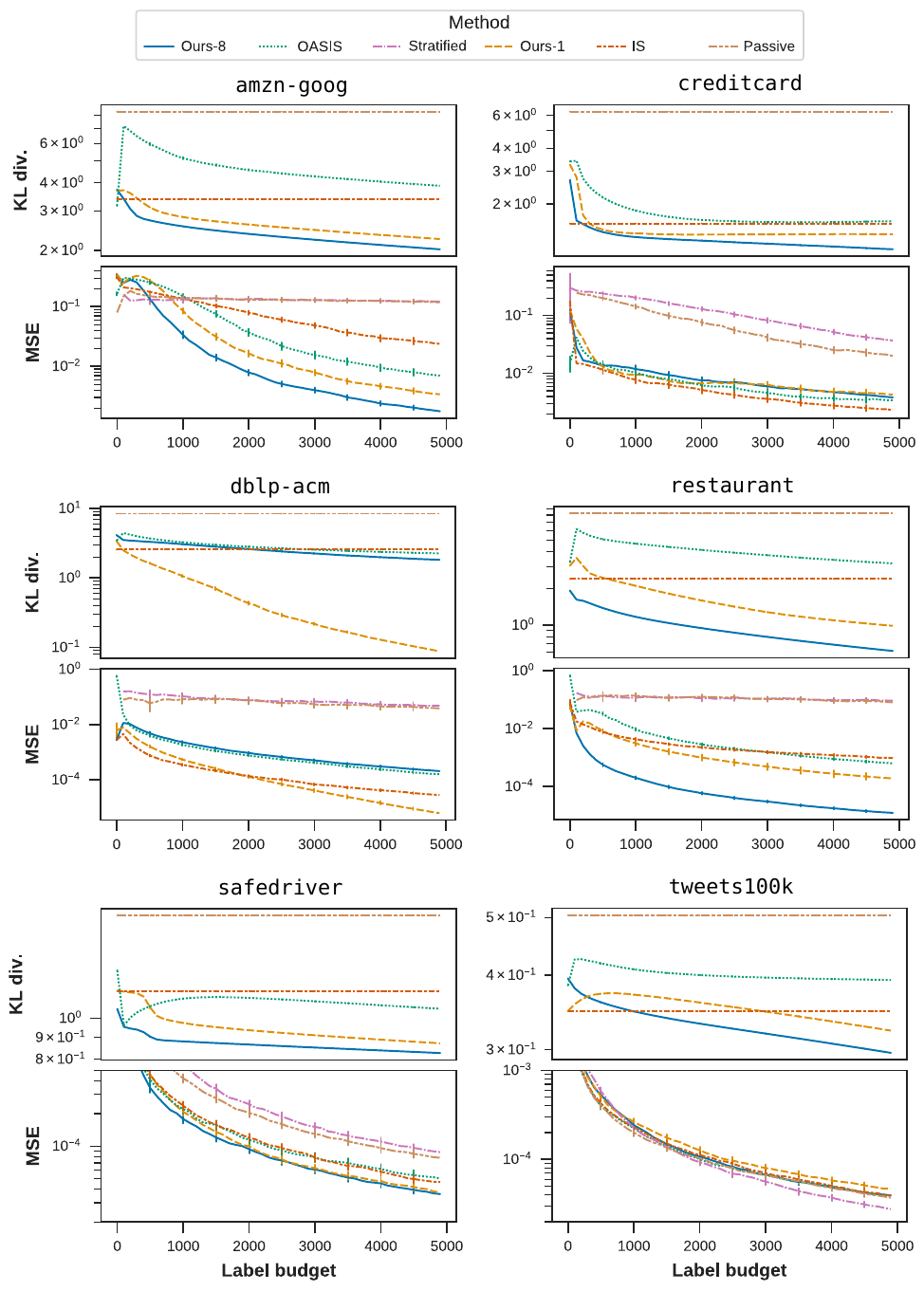}
  \caption{MSE of estimated F1 score over 1000~repeats as a function of 
  consumed label budget. 
  95\% bootstrap confidence intervals are included.}
  \label{app-fig:extra-convergence-plots}
\end{figure}

\paragraph{F1 score.} 
We provide additional convergence plots for estimating F1 score in 
Figure~\ref{app-fig:extra-convergence-plots}, which cover six of the seven 
datasets listed in Table~\ref{tbl:data-sets}.
The convergence plot for \texttt{abt-buy} is featured in the main paper 
in Figure~\ref{fig:convergence-abt-buy}.
The biased sampling methods (\texttt{Ours-8}, \texttt{Ours-1}, 
\texttt{OASIS} and \texttt{IS}) converge significantly more rapidly than 
\texttt{Passive} and \texttt{Stratified} for five of the six datasets.
\texttt{tweets100k} is an exception because it is the only dataset with 
well-balanced classes. 
Of the biased sampling methods, \texttt{Ours-8} performs best on 
two of the six datasets (\texttt{amzn-goog} and \texttt{restaurant}) and 
equal-best on one (\texttt{safedriver}). 
\texttt{Ours-1} performs best on one dataset (\texttt{dblp-acm}) and 
equal-best on one (\texttt{safedriver}), while
\texttt{IS} performs best on one dataset (\texttt{creditcard}).
In general, we expect \texttt{IS} to perform well when the oracle response 
$p(y|x)$ is already well-approximated by the model under evaluation. 
When this is not the case, the adaptive methods are expected to perform 
best as they produce refined estimates of $p(y|x)$ during sampling.

\paragraph{Accuracy.}
We repeated the experiments described in Section~\ref{sec:experiments} 
for estimating accuracy. 
Although accuracy is not recommended for evaluating classifiers in the 
presence of class imbalance, it is interesting to see whether the 
biased sampling methods offer any improvement in label efficiency, given the 
discussion in Section~\ref{sec:limitations-mc}.
Figures~\ref{app-fig:convergence-plots-accuracy-1}(a) 
and~\ref{app-fig:convergence-plots-accuracy-2} present 
convergence plots for the seven datasets listed in Table~\ref{tbl:data-sets}.
A summary of the MSE for all datasets is included in 
Figure~\ref{app-fig:convergence-plots-accuracy-1}(b) assuming a label 
budget of 1000. 
OASIS does not appear in these results, as it does not support estimation of 
accuracy.

We find that gains in label efficiency are less pronounced for the 
biased sampling methods when estimating accuracy. 
This is to be expected as accuracy is less sensitive to class imbalance, 
as noted in Section~\ref{sec:limitations-mc}. 
However, there is still a marked improvement in the convergence rate---by 
an order of magnitude or more---for three of the datasets 
(\texttt{abt-buy}, \texttt{dblp-acm} and \texttt{restaurant}). 
Again, we find that the more imbalanced datasets (\texttt{abt-buy}, 
\texttt{amzn-goog}, \texttt{dblp-acm} and  \texttt{restaurant}) 
seem to benefit most from biased sampling.

\paragraph{Precision-recall curves.}
Figure~\ref{app-fig:convergence-plots-pr-curve} presents convergence plots 
for estimating precision-recall curves for two of the test pools in 
Table~\ref{tbl:data-sets} assuming a label budget of~5000. 
We find that the biased sampling methods (\texttt{Ours-8}, 
\texttt{Ours-1} and \texttt{IS}) offer a significant improvement 
in the MSE compared to \texttt{Passive} and \texttt{Stratified}---by 
1--2~orders of magnitude.
The difference in the MSE between the AIS-based methods and \texttt{IS} 
is less pronounced here than when estimating F1-score. 

\begin{figure}
  \centering
  \raisebox{6cm}{(a)}\includegraphics[width=0.45\linewidth]{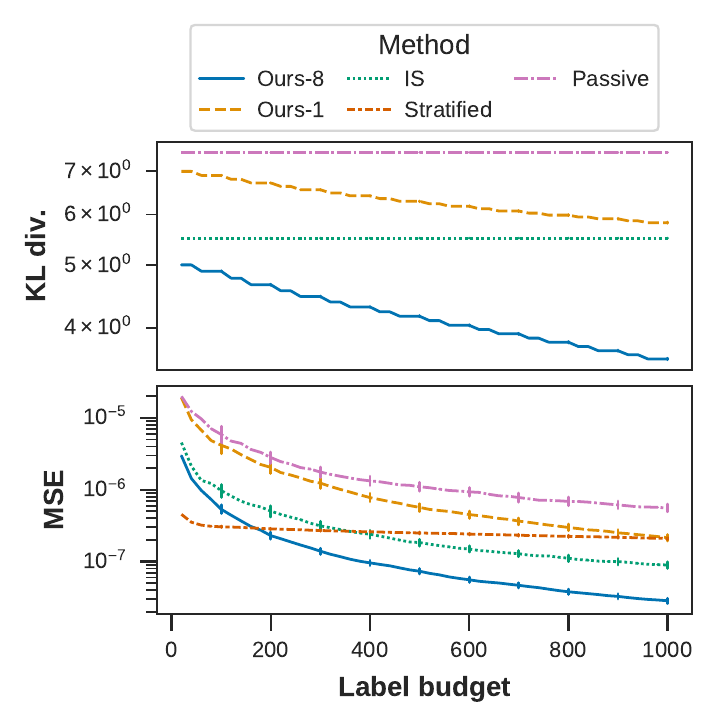}
  \hfill
  \raisebox{6cm}{(b)}\includegraphics[width=0.45\linewidth]{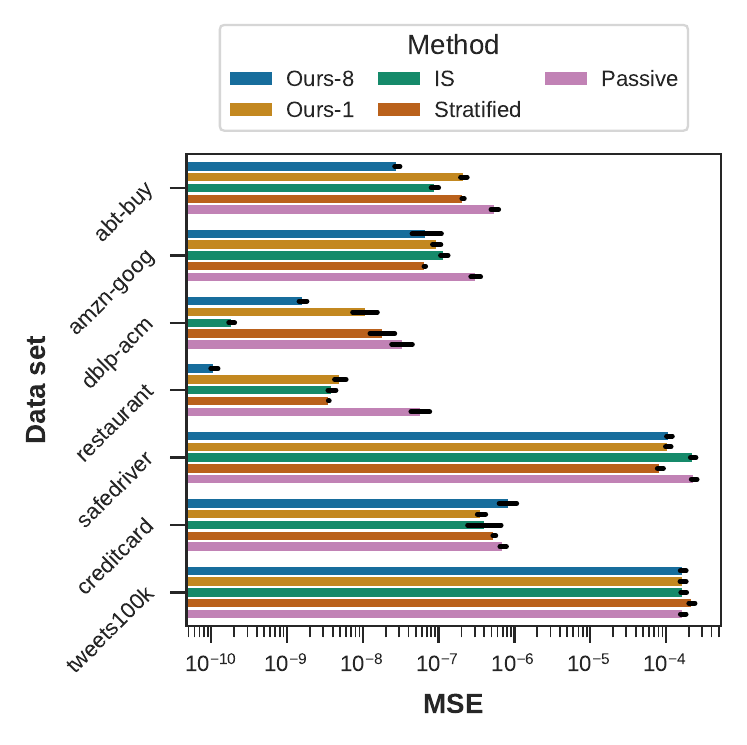}  
  \caption{(a)~Convergence plot for estimating accuracy on \texttt{abt-buy} 
  over 1000 repeats. 
  The upper panel plots the KL divergence from the proposal 
  to the asymptotically-optimal one. 
  The lower panel plots the MSE of the estimate for accuracy. 
  95\% bootstrap confidence intervals are included. 
  (b)~MSE of the estimate for accuracy after 1000 label queries. 
  95\% bootstrap confidence intervals are shown in black.}
  \label{app-fig:convergence-plots-accuracy-1}
\end{figure}

\begin{figure}
  \centering
  \includegraphics[scale=1.3]{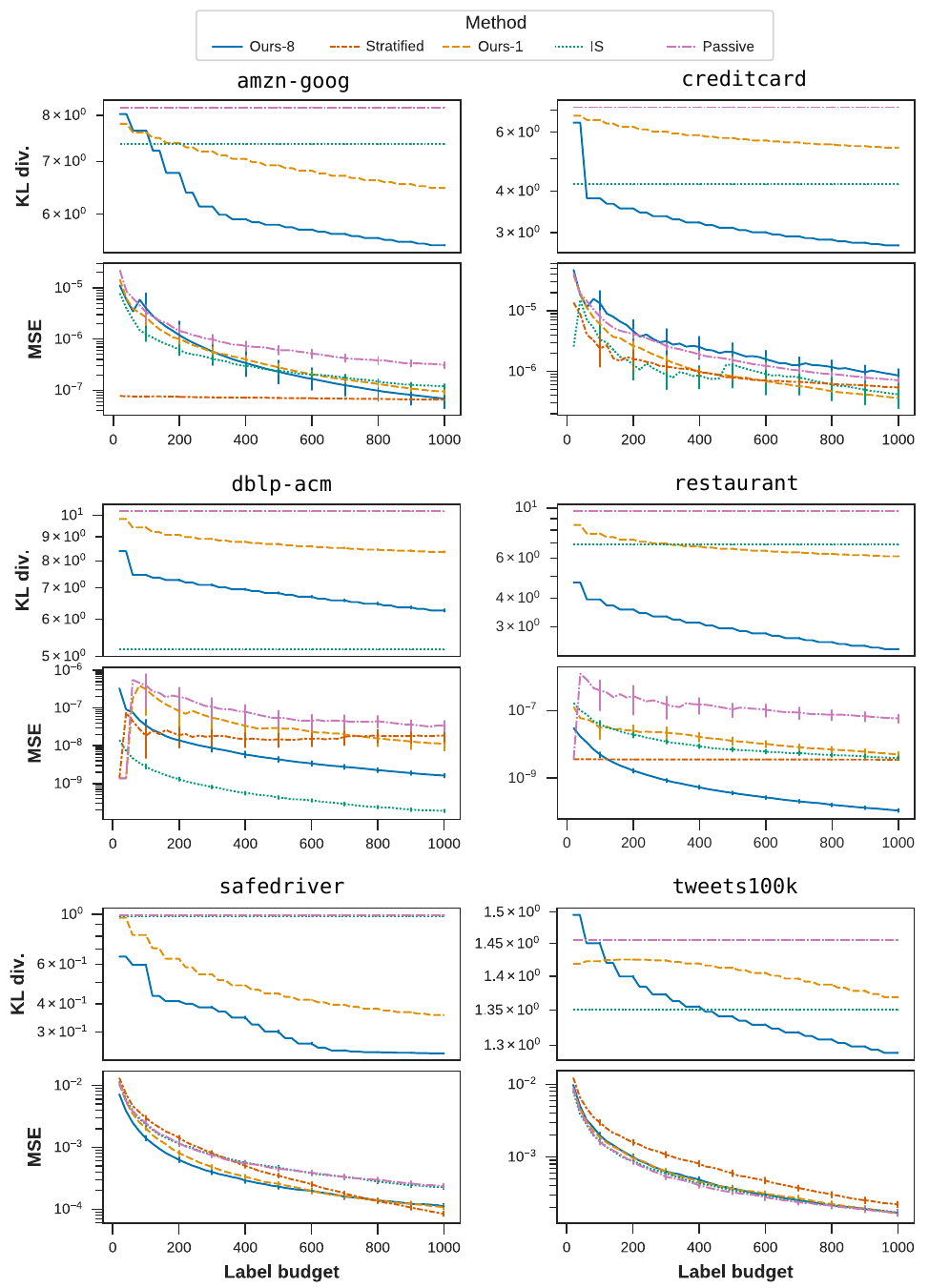}
  \caption{Convergence plots for estimating accuracy over 1000~repeats. 
  The upper panel in each sub-figure plots the KL~divergence from the 
  proposal to the asymptotically optimal one. 
  The lower panel plots the MSE of the estimate for accuracy.
  95\% bootstrap confidence intervals are included.}
  \label{app-fig:convergence-plots-accuracy-2}
\end{figure}

\begin{figure}
  \centering
  \includegraphics[scale=1.3]{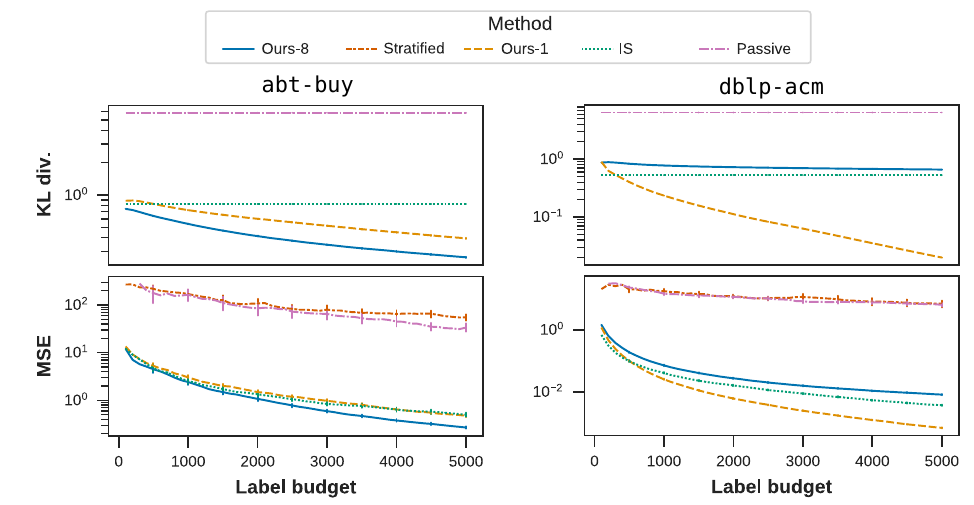}
  \caption{Convergence plots for estimating a precision-recall curve for 
    \texttt{abt-buy} and \texttt{dblp-acm} over 1000~repeats. 
    The upper panel in each sub-figure plots the KL~divergence from the 
    proposal to the asymptotically optimal one. 
    The lower panel plots the total MSE of the precision and recall 
    estimates at each threshold.
    95\% bootstrap confidence intervals are included.}
  \label{app-fig:convergence-plots-pr-curve}
\end{figure}

\end{document}